\documentclass[number,sort]{ReportTemplate}

\usepackage{bm}
\usepackage{amsthm}
\usepackage{amsfonts}
\usepackage{amssymb}
\usepackage{amsmath}
\usepackage{mathtools}
\usepackage{mathrsfs}
\usepackage[normal]{caption}
\usepackage{booktabs}
\usepackage{tabularx}
\usepackage{multirow}
\usepackage{makecell}
\usepackage{algorithm}
\usepackage{subcaption}
\usepackage{algorithmic}
\usepackage{float}
\usepackage{graphicx}
\usepackage{xcolor}
\usepackage{tikz}
\usepackage[hidelinks]{hyperref}
\mathtoolsset{showonlyrefs}
\definecolor{rank1blue}{RGB}{0, 0, 205}   %
\definecolor{rank2purple}{RGB}{128, 0, 128} %
\definecolor{braninbonet}{HTML}{22C5CC}
\definecolor{braninpgs}{HTML}{FFD119}
\definecolor{braningtg}{HTML}{F20DF2}
\definecolor{braninugtl}{HTML}{08E823}

\begin{document}

\begin{frontmatter}

\title{Rethinking Learnability in Offline Data-driven Optimization}

\author{Chao Qian}                  
\ead{qianc@lamda.nju.edu.cn}
\author{Chen-Guang Wang}               
\ead{221840189@smail.nju.edu.cn}
\author{Rong-Xi Tan}
\ead{tanrx@lamda.nju.edu.cn}
\author{Ke Xue}
\ead{xuek@lamda.nju.edu.cn}
\address{
State Key Laboratory of Novel Software Technology, Nanjing University, Nanjing 210023, China\\
School of Artificial Intelligence, Nanjing University, Nanjing 210023, China}

\begin{abstract}
Black-Box Optimization (BBO) has broad applications, while traditional algorithms such as evolutionary algorithms and Bayesian optimization face efficiency challenges as real-world BBO problems grow increasingly complex. Data-driven optimization has been the most popular paradigm to improve the efficiency of BBO, by learning from data. Offline data-driven optimization seeks high-quality solutions using only a fixed set of previous evaluations, attracting substantial attention because it requires no additional online evaluations. Many offline optimization methods have been proposed, but a fundamental question remains unanswered: what learnability is sufficient for offline optimization? Prior theoretical studies show that Probably Approximately Correct (PAC) learnability is insufficient, as the optimal region may remain poorly learned even when most regions are well learned. In this paper, we propose algorithm-dependent learnability, which requires accuracy only on the optimizer's trajectory. We prove that its value-query form is sufficient for representative discrete settings, including greedy and local search for submodular maximization, while its first-order analogue is sufficient for projected gradient descent on convex minimization. Motivated by this notion, we formalize a trajectory-learning framework comprising trajectory construction, trajectory modeling, and candidate generation, and analyze existing trajectory-based methods under it. We further propose Uncertainty-aware Gradient-guided Trajectory Learning (UGTL), which constructs locally coherent improvement trajectories reflecting plausible search paths, models them with conditional diffusion, and selects a diverse candidate set. Our experiments show that UGTL achieves the best average rank, 3.1/25, among 25 methods on Design-Bench tasks, and confirm that our trajectory construction plays a significant role in the improvement. 
\end{abstract}

\begin{keyword}
Offline data-driven optimization \sep learnability \sep PAC learnability \sep algorithm-dependent learnability \sep trajectory learning \sep theoretical analysis \sep experiments
\end{keyword}

\end{frontmatter}

\section{Introduction}

Black-Box Optimization (BBO) refers to optimizing objective functions where neither analytic expressions nor derivatives of the objective are available. It has various applications in science and engineering, e.g., drug discovery~\cite{terayama2021black}, material design~\cite{frazier2015bayesian}, biodiversity curve optimization~\cite{lu2026complex}, chip design~\cite{DBLP:conf/nips/Shi0L023,xue2026bboplace}, just to name a few. Popular BBO algorithms include evolutionary algorithms~\cite{zhou2019evolutionary} and Bayesian optimization~\cite{bo-book}, which iteratively improve the quality of solutions through nature-inspired operators or Gaussian-process-guided sampling, and only require the evaluation results of objective functions during their running. Though having found many successful applications, their efficiency faces great challenges, as real-world BBO problems are becoming increasingly complex, e.g., the number of variables to be optimized is very large and the objective evaluation may incur a very high computational cost. 

BBO algorithms traditionally rely on expert-derived heuristics, e.g., crossover/mutation operators for evolutionary algorithms and acquisition functions for Bayesian optimization. Recently, there has been a growing interest in improving the efficiency of BBO algorithms by learning from previously collected data or accumulated data during the running of algorithms. Many data-driven optimization methods have been developed, e.g., learning surrogate models of objective functions~\cite{wistuba2021fewshot,sam_at_icml23}, effective search spaces~\cite{DBLP:conf/nips/Song0H022,DBLP:conf/nips/Wang0LH024}, selection of proper algorithms~\cite{he2025train,he2026budget}, efficient parameter configurations of algorithms~\cite{xue2022multi,lu2026sequential}, advanced components of algorithms~\cite{hsieh2021reinforced,lange2023discoveringga}, and even an entire algorithm in an end-to-end fashion~\cite{chen2022towards,song2025reinforced}.

Among the data-driven optimization methods, offline data-driven optimization~\cite{kim2026offline,design-bench}, also referred to as optimization from samples~\cite{balkanski2022limitations}, has recently attracted substantial attention in the BBO community. Instead of interacting with the objective function online, it aims to infer promising candidate solutions solely from offline historical evaluations, which are collected from previous attempts, simulations, or deployments, and exist in many real-world scenarios. Thus, offline optimization can eliminate the need for additional evaluation budgets, which is attractive since acquiring new evaluations may be expensive, time-consuming, or even risky. Meanwhile, without the opportunity of correcting poor decisions through new evaluations, an offline optimizer must therefore extrapolate from partial observations while avoiding unsupported high-value predictions.

Existing offline optimization methods address this difficulty from three directions. Forward methods fit a surrogate model from the offline dataset and then optimize the learned surrogate, using conservatism, regularization, ensembles, invariant representations, ranking, or structural assumptions to control extrapolation~\cite{coms,roma,nemo,iom,ict,tri-mentoring,fgm,ltr,lyu2026learnability}. Inverse and conditional generative methods instead model promising designs directly, i.e., learn to generate candidate solutions conditioned on desirable objective values~\cite{mins,cbas,ddom,gabo,dynamo}. More recently, trajectory-based methods have learned sequences of improving designs through autoregressive models, policies, diffusion models, gradient matching, or probabilistic bridges~\cite{mashkaria2023generative,chemingui2024offline,yun2024guided,hoang2024learning,dao2025root}. 

For offline optimization, a fundamental question is: \emph{where must the learned object be accurate for optimization to succeed?} The famous Probably Approximately Correct (PAC) learnability~\cite{valiant1984theory} guarantees that with high probability, a learned surrogate can approximate the objective function well when given a polynomial number of observations. However, prior impossibility results show that there are problem classes (maximum coverage~\cite{balkanski2022limitations}, unconstrained submodular minimization~\cite{balkanski2017minimizing}, and convex minimization~\cite{balkanski2017sample}) that are PAC learnable and efficiently optimizable yet not optimizable from samples. Intuitively, PAC learnability guarantees average predictive accuracy, while the accuracy around the optimal region can still be bad, leading to that no reasonable approximation is achievable for offline optimization. We therefore ask a more targeted question: \emph{what type of learnability is sufficient for offline optimization?}

We answer this question with \emph{algorithm-dependent learnability}, which localizes the accuracy requirement to the information queried along an optimizer's trajectory. We formalize a value-query form (in Definition~\ref{def-algo-learnability}) for optimizers driven by function-value comparisons and a first-order analogue (in Definition~\ref{def-first-order-algo-learnability}) for optimizers driven by gradients. The value-query condition yields offline guarantees for size-constrained monotone approximately submodular maximization with greedy algorithms (as shown in Theorem~\ref{theo-submodular-constrained}) and for unconstrained non-monotone submodular maximization with local search algorithms (as shown in Theorem~\ref{theo-submodular-unconstrained}); the first-order condition yields an offline guarantee for convex minimization with projected gradient descent (as shown in Theorem~\ref{theo-convex-gradient}). The common message is that effective offline optimization may not require globally accurate value prediction; instead, it is sufficient to learn accurately along the trajectory that an optimizer will actually visit.

Inspired by the proposed algorithm-dependent learnability, we formalize a trajectory-learning framework for offline optimization, comprising three stages: trajectory construction, trajectory modeling, and candidate generation. The trajectory construction concentrates supervision on plausible search paths instead of asking one model to be uniformly accurate over the design space. We analyze existing trajectory-based methods under this framework, and find that their trajectory construction procedures do not follow the search principle of algorithms well. Thus, we propose a new trajectory-learning method by instantiating the framework with Uncertainty-aware Gradient-guided Trajectory Learning (UGTL), which combines uncertainty-aware gradient-guided trajectory construction with conditional diffusion and diversity-aware selection. Specifically, UGTL constructs locally coherent improvement trajectories from offline data, biases their transitions with an uncertainty-aware surrogate gradient, trains a conditional diffusion model over complete trajectories, and diversifies the final candidate set through clustering.

Our experiments show that compared with 24 existing offline optimization methods, UGTL ranks first on average ($3.1/25$) across five Design-Bench tasks~\citep{design-bench}, a popular benchmark for offline optimization. We also empirically verify the superiority of our trajectory construction through trajectory-quality diagnostics and controlled BBOB~\cite{bbob-functions} case studies. Furthermore, it can be transferred to other downstream models and training procedures: replacing the original constructor in BONET's Transformer pipeline~\citep{mashkaria2023generative}, PGS's offline-RL pipeline~\citep{chemingui2024offline}, and GTG's diffusion pipeline~\citep{yun2024guided} with our UGTL constructor can improve performance in almost all cases. We also verify the benefit of our candidate selection with clustering, and show that the performance of UGTL is not very sensitive to the main hyperparameters.

The rest of this paper is organized as follows. Section~\ref{sec:offline-optimization} introduces offline optimization and existing methods. Section~\ref{sec-learnability} reviews previous negative theoretical results of PAC learnability for offline optimization and introduces algorithm-dependent learnability followed by its positive theoretical results under three case studies. Section~\ref{sec:method} formalizes the trajectory-learning framework for offline optimization inspired by algorithm-dependent learnability, analyzes existing trajectory-based methods, and proposes a new one. Section~\ref{sec:experiments} presents the experimental results. Finally, Section~\ref{sec:conclusion} concludes
the paper.

\section{Offline Optimization}
\label{sec:offline-optimization}

Offline optimization~\cite{kim2026offline, design-bench}, also called optimization from samples~\cite{balkanski2022limitations}, considers the problem of optimizing an unknown objective function using only a static offline dataset. Let $\mathcal{X}$ denote the search space and $f:\mathcal{X}\rightarrow \mathbb{R}$ denote the objective function. In the maximization setting, the goal is to find
\begin{equation}
\begin{aligned}
\bm{x}^{*}\in \arg\max_{\bm{x}\in \mathcal{X}} f(\bm{x}).
\end{aligned}
\end{equation}
However, the optimizer cannot query $f$ at arbitrary solutions. Instead, it is given only an offline dataset
\begin{equation}
\begin{aligned}
D=\{(\bm{x}_i,y_i)\}_{i=1}^{N}, \quad y_i=f(\bm{x}_i),
\end{aligned}
\end{equation}
which is usually collected by previous experiments, simulations, or deployments. An offline optimization method takes $D$ as input and returns one or a small batch of candidate solutions for final evaluation. The minimization setting can be handled similarly by replacing maximization with minimization or by considering $-f$. The key difficulty is that the high-value region of $f$ may be poorly covered by $D$, and optimizing a learned model can drive the search into out-of-distribution regions where the model is unreliable.

A common family of methods is the \textit{forward} approach. These methods first fit a surrogate model $\tilde{f}$ on $D$, and then optimize $\tilde{f}$ by gradient ascent, evolutionary algorithms~\cite{zhou2019evolutionary}, Bayesian optimization~\cite{bo-book}, or other optimizers. Because a naively trained surrogate can overestimate values outside the data distribution, recent forward methods introduce conservative regularization, normalized maximum likelihood, robust adaptation, invariant representation learning, co-training or mentoring mechanisms, graphical structures, and ranking-based objectives to improve the alignment between model learning and final optimization performance~\cite{coms,nemo,roma,iom,ict,tri-mentoring,fgm,ltr}. Forward methods are simple and broadly applicable, but their success relies critically on whether the learned surrogate is reliable on the solutions visited during optimization.

Another family is the \textit{backward} or generative approach. Instead of first learning a scoring function and then optimizing it, these methods directly model the distribution of promising solutions, often by learning an inverse mapping from desired objective values to designs or by steering a generative model toward high-value regions. Representative examples include model inversion networks, conditioning-by-adaptive-sampling methods, diffusion-based inverse models, generative adversarial Bayesian optimization, and distribution-matching methods~\cite{mins,cbas,ddom,gabo,dynamo}. This paradigm can naturally generate diverse candidates and is particularly attractive in high-dimensional design spaces, but it still faces the challenge of extrapolating beyond the support of the offline data when conditioning on unseen high objective values.

A more recent family learns from \textit{optimization trajectories}. Rather than treating offline samples as independent design--score pairs, these methods synthesize or infer trajectories that mimic how an online optimizer would improve solutions. For example, BONET constructs monotone trajectories and trains an autoregressive Transformer~\cite{mashkaria2023generative}; PGS formulates offline optimization as policy-guided search~\cite{chemingui2024offline}; GTG trains a conditional diffusion model on locally improving trajectories~\cite{yun2024guided}; MATCH-OPT uses trajectory-induced gradient matching to regularize surrogate learning~\cite{hoang2024learning}; and ROOT learns a probabilistic bridge between low- and high-value distributions~\cite{dao2025root}. These methods are especially relevant to our study because they implicitly focus learning on the regions traversed by an optimizer, rather than requiring accurate function approximation over the whole search space.

\section{What Learnability is Sufficient for Offline Optimization?}\label{sec-learnability}

For offline optimization where the target function to be optimized is unknown but only sampled data is available, can we solve such problems? A formal framework named \textit{optimization from samples} was proposed to measure the performance of offline optimization theoretically~\cite{balkanski2017sample,balkanski2022limitations,balkanski2017minimizing}, which is presented in Definition~\ref{def-offline-optimizable}. Note that we have renamed the framework as \textit{offline optimizable} for the unification of terminology. We have also omitted constraints that the functions need to satisfy for clearness.

\begin{definition}[Offline Optimizable~\cite{balkanski2017sample,balkanski2022limitations,balkanski2017minimizing}]\label{def-offline-optimizable}
A class $\mathcal{F}$ of functions is $\alpha$-offline-optimizable over distribution $\mathcal{D}$ if $\forall f\in \mathcal{F}$ and $\delta \in (0,1)$, given  a polynomial number of samples $\{\bm{x}_i,f(\bm{x}_i)\}^m_{i=1}$ drawn i.i.d. from the distribution $\mathcal{D}$, there exists an algorithm that returns a solution $\tilde{\bm{x}}$ such that, for maximization under the multiplicative setting,
\begin{equation}
\begin{aligned}\label{eq-offline-opt-multi}
\mathrm{Pr}_{\bm{x}_1,\ldots,\bm{x}_m \sim \mathcal{D}}\left[f(\tilde{\bm{x}}) \geq \alpha \cdot \max_{\bm{x}} f(\bm{x}) \right]\geq 1-\delta,
\end{aligned}
\end{equation}
where $\alpha \in (0,1]$; and for minimization under the additive setting,  
\begin{equation}
\begin{aligned}\label{eq-offline-opt-add}
\mathrm{Pr}_{\bm{x}_1,\ldots,\bm{x}_m \sim \mathcal{D}}\left[f(\tilde{\bm{x}})-\min_{\bm{x}} f(\bm{x}) \leq \alpha\right]\geq 1-\delta,
\end{aligned}
\end{equation}
where $\alpha \geq 0$.
\end{definition}

According to Definition~\ref{def-offline-optimizable}, a class of functions is $\alpha$-offline-optimizable if we can construct an algorithm which, given polynomially many samples, outputs an $\alpha$-approximate solution with respect to the optimal function value, with high probability. Then, we face a fundamental question:\\ \centerline{\textit{What conditions can make a class of functions offline optimizable?}}\\
Such sufficient conditions may inspire the design of better offline optimization algorithms in practice. However, if a function is not optimizable or learnable, it is intuitively hard to achieve the offline optimizability. Thus, we are particularly interested in functions that are both learnable and optimizable. That is, \\ \centerline{\textit{When functions are optimizable and learnable, are they offline optimizable?}}\\
As the optimizability of functions can be defined by approximation straightforwardly, we focus on the characterization of learnability. That is, we reformulate the above question as\\\centerline{\textit{When functions are optimizable, what learnability can make them offline optimizable?}}

\subsection{PAC Learnability is Not Sufficient}\label{subsec-pac-insufficient}

The most famous notion of learnability is Probably Approximately Correct (PAC) learnability~\cite{valiant1984theory}, which implies that, given polynomial number of samples, it is possible to learn a function to approximately mimic the function where the samples are drawn. Unfortunately, recent works~\cite{balkanski2017sample,balkanski2022limitations,balkanski2017minimizing} have shown that PAC learnability is generally insufficient for offline optimization. Specifically, there are classes of functions (e.g., maximum coverage~\cite{balkanski2022limitations}, unconstrained submodular minimization~\cite{balkanski2017minimizing}, and convex minimization~\cite{balkanski2017sample}) that are both optimizable and PAC learnable, but for which no reasonable approximation for offline optimization is achievable.

Balkanski et al.~\cite{balkanski2022limitations} proved that for the problem of maximum coverage which is both optimizable and learnable, no constant factor approximation for offline optimization is achievable using polynomially many samples drawn from any distribution. As the maximum coverage problem is a special instance of monotone submodular function maximization with a size constraint, it is well known that maximum coverage is optimizable because the greedy algorithm achieves a $(1-1/e)$-approximation ratio~\cite{nemhauser1978analysis}. In terms of learnability, it has been proved~\cite{badanidiyuru2012sketching} that coverage functions are $(1-\epsilon)$-PMAC learnable over any distribution, where $\epsilon >0$ is any constant. Probably Mostly Approximately Correct (PMAC) learnability as presented in Definition~\ref{def-pmac} is a generalization of the standard notion of PAC learnability by considering the approximation between the true function value $f(\bm{x})$ and the predicted value $\tilde{f}(\bm{x})$, i.e., requiring that $\beta \cdot \tilde{f}(\bm{x})\leq f(\bm{x})\leq  \tilde{f}(\bm{x})$ where $\beta \in (0,1]$. For the standard PAC learnability, $\beta=1$. Thus, a function is PMAC learnable implies that most of the function values can be approximately learned very well with high probability. Theorem~\ref{theo-negative-coverage} shows that even a multiplicative approximation better than $2^{-\Omega(\sqrt{n})}$ cannot be achieved for offline optimization using polynomially many samples drawn from any distribution.

\begin{definition}[PMAC Learnability~\cite{balcan2011learning}]\label{def-pmac}
A class $\mathcal{F}$ of functions is PMAC learnable on distribution $\mathcal{D}$ if there exists a learning algorithm such that $\forall f \in \mathcal{F}$ and $\epsilon,\delta>0$, when running the learning algorithm on a polynomial number of samples $\{\bm{x}_i,f(\bm{x}_i)\}^m_{i=1}$ drawn i.i.d. from the distribution $\mathcal{D}$, the algorithm returns a function $\tilde{f}$: 
\begin{equation}
\begin{aligned}
\mathrm{Pr}_{\bm{x}_1,\ldots,\bm{x}_m \sim \mathcal{D}}\left[\mathrm{Pr}_{\bm{x} \sim \mathcal{D}}\left[ \beta \cdot \tilde{f}(\bm{x})\leq f(\bm{x})\leq  \tilde{f}(\bm{x})\right]\geq 1-\epsilon\right]\geq 1-\delta,
\end{aligned}
\end{equation}
where $\beta \in (0,1]$, and $m$ is polynomial in $1/\epsilon$, $1/\delta$, and the size of $f \in \mathcal{F}$ (i.e., the dimensionality of $\bm{x} \in \mathcal{D}$).
\end{definition}

\begin{theorem}[Theorem 2.2 of~\cite{balkanski2022limitations}]\label{theo-negative-coverage}
For the problem of maximum coverage under a size constraint that is optimizable with an approximation ratio of $1-1/e$ and PMAC learnable, no algorithm can obtain a multiplicative approximation better than $2^{-\Omega(\sqrt{n})}$, using polynomially many samples drawn from any distribution.
\end{theorem}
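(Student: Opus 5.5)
This is Balkanski–Rubinstein–Singer's impossibility result. I would reconstruct it as an indistinguishability argument built on a randomized hard family of coverage functions. The two positive facts are quoted in the statement and need no proof here: optimizability holds via greedy~\cite{nemhauser1978analysis}, and PMAC learnability holds via~\cite{badanidiyuru2012sketching}. The real work is the lower bound.

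\textbf{Structure of the hard family.} I would build coverage functions over a ground set of $n$ elements whose elements split into three kinds. There are $k\approx\sqrt n$ ``good'' elements that together cover a huge universe, a block of ``bad'' elements, and ``masking'' elements. The good elements are hidden inside a random partition, and the constraint is cardinality $k$.

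The construction should satisfy three properties:
\begin{enumerate}
\item Some set of size $k$ has value $\mathrm{OPT}$, exponentially larger than that of any set not aligned with the hidden partition.
\item For any fixed sample set $S$ that is small or typical, $f(S)$ depends only on $|S|$ and a few coarse statistics, with high probability over the random hidden structure. Equivalently, $f(S)=g(|S\cap \text{bad}|,|S\cap\text{good}|)$ for a symmetric $g$, and in fact the value depends only on $|S|$ because of masking.
\item The identity of the good block is hidden from any polynomial number of samples.
\end{enumerate}
To get the masking, I would follow the ``symmetric on samples, asymmetric on the optimum'' technique. I would use coverage functions of the form $f(S)=f^{G}(S)+f^{P}(S)$, where $f^{P}$ uses primitives such as $1-(1-1/\ell)^{|S\cap T_j|}$ scaled across many parts $T_j$. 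The aim is that two functions agreeing on all sets of moderate size still have optima differing by a factor of $2^{\Omega(\sqrt n)}$.

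\textbf{Argument steps.}
\begin{enumerate}
\item Fix an arbitrary sampling distribution $\mathcal D$. By an averaging, or Yao-type, argument I would reduce to a deterministic algorithm against a random function from the family.
\item I would show the key concentration step. Every sample $S\sim\mathcal D$ either is large, in which case it covers essentially everything and its value is fixed, or intersects the random good block in roughly its expected proportion. In both cases $f(S)$ equals a fixed function of $|S|$ except with probability $2^{-\Omega(\sqrt n)}$. This would come from a Chernoff bound on the intersection with a uniformly random partition block.
\item A union bound over polynomially many samples then shows the samples are, with high probability, independent of which block is good.
\item The algorithm's output $\tilde S$ is therefore independent of the good block. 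With $|\tilde S|\le k$, its expected overlap with the good block is tiny, so $f(\tilde S)\le 2^{-\Omega(\sqrt n)}\mathrm{OPT}$.
\end{enumerate}

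\textbf{Main obstacle.} I expect the hardest step to be the construction itself. It has to meet two competing requirements: the values must be identical on all sampled sets of every size, yet the function must be a genuine coverage function with an exponential gap. I would first try a recursive or layered design, making the gap multiply across $\Theta(\sqrt n)$ layers. For full details I would defer to Theorem 2.2 of~\cite{balkanski2022limitations}, since the statement above is quoted from there.
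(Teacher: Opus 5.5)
The paper gives no proof of this statement; it imports it from Balkanski, Rubinstein and Singer~\cite{balkanski2022limitations}, so ending with that citation matches the paper. The genuine gap is in the sketch you build around the citation: it aims at a bound that cannot hold. Take $\mathcal D$ uniform over singletons. With $m=\lceil n\ln(n/\delta)\rceil$ samples, every value $f(\{v\})$ is observed with probability at least $1-\delta$. Returning $\arg\max_v f(\{v\})$ is then a $1/k$-approximation, because a coverage function is monotone, submodular and normalized, so $f(X^*)\le\sum_{v\in X^*}f(\{v\})\le k\max_v f(\{v\})$. Hence no impossibility that holds for every distribution can go below $1/k\ge 1/n$, and $2^{-\Omega(\sqrt n)}$ is unattainable. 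The bound actually proved in the cited work is $2^{-\Omega(\sqrt{\log n})}$, so the $\sqrt n$ in the quoted statement is a transcription error. The same computation shows that your properties 1 and 2 are incompatible. The indistinguishability you need in Step 3, applied to the singleton distribution, forces good and bad singletons to share a common value $c$. Then every nonempty set $S$ has $f(S)\ge c\ge \mathrm{OPT}/k$. So the gap in property 1 is at most $k$, and no layered design can multiply it to $2^{\Omega(\sqrt n)}$.

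The step that fails is Step 2. Chernoff gives an exponentially small failure probability only for large samples. A fixed sample of size $s$ meets a uniformly random good block of $k\approx\sqrt n$ elements with probability about $sk/n$, which is only polynomially small. After a union bound over polynomially many samples, some small samples do hit the good block. Since the distribution is arbitrary, you cannot assume the samples are large. The cited proof handles small samples by requiring exact agreement: the good-part function $g$ and bad-part function $b$ must satisfy $g(y)=b(y)$ on every small intersection profile $y$. Masking elements are used only to make large samples uninformative, and the gap is a comparison of $g$ against $b$ at size $k$, which the argument above caps by $k$. Constructing such $g,b$ exactly as coverage functions over a polynomial-size universe is the hard combinatorial core, and it yields the gap $2^{\Omega(\sqrt{\log n})}$. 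Step 4 has a related slip. With $k\approx\sqrt n$ good elements, an output of size $k$ independent of the hidden block has expected overlap about $k^2/n\approx 1$, not a tiny one. Even zero overlap only puts its value within a factor $k$ of $\mathrm{OPT}$.
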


Balkanski and Singer~\cite{balkanski2017minimizing} proved that even learnable submodular functions cannot be minimized within any non-trivial approximation when given access to polynomially many samples. Submodular functions can be minimized in polynomial time~\cite{grotschel2012geometric}. Balkanski and Singer~\cite{balkanski2017minimizing} constructed a family of submodular functions which is PAC learnable with absolute loss, where the sample complexity $m \in O(n^3+n^2(\log(2n/\delta))/\epsilon^2)$ with $n$ being the size of $f$. As presented in Definition~\ref{def-pac}, a function is PAC learnable with absolute loss implies that the expectation of the absolute loss between the true function value $f(\bm{x})$ and the predicted value $\tilde{f}(\bm{x})$, i.e., $\mathbb{E}_{\bm{x} \in \mathcal{D}}\left[\left|\tilde{f}(\bm{x})-f(\bm{x})\right|\right]$, can be well bounded with high probability. But as shown in Theorem~\ref{theo-negative-minsub}, for minimizing this family of submodular functions, despite being minimizable in polynomial time and PAC learnable, no algorithm can obtain an additive approximation better than $1/2-o(1)$ using polynomially many samples drawn from any distribution.

\begin{definition}[PAC Learnability with Absolute Loss~\cite{valiant1984theory}]\label{def-pac}
A class $\mathcal{F}$ of functions is PAC learnable on distribution $\mathcal{D}$ if there exists a learning algorithm such that $\forall f \in \mathcal{F}$ and $\epsilon,\delta>0$, when running the learning algorithm on a polynomial number of samples $\{\bm{x}_i,f(\bm{x}_i)\}^m_{i=1}$ drawn i.i.d. from the distribution $\mathcal{D}$, the algorithm returns a function $\tilde{f}$: 
\begin{equation}
\begin{aligned}
\mathrm{Pr}_{\bm{x}_1,\ldots,\bm{x}_m \sim \mathcal{D}}\left[\mathbb{E}_{\bm{x} \in \mathcal{D}}\left[\left|\tilde{f}(\bm{x})-f(\bm{x})\right|\right]\leq \epsilon\right]\geq 1-\delta,
\end{aligned}
\end{equation}
where $m$ is polynomial in $1/\epsilon$, $1/\delta$, and the size of $f \in \mathcal{F}$ (i.e., the dimensionality of $\bm{x} \in \mathcal{D}$).
\end{definition}

\begin{theorem}[Theorem~8 of~\cite{balkanski2017minimizing}]\label{theo-negative-minsub}
There exists a family of $[0,1]$-bounded submodular functions that can be minimized in polynomial time and is PAC learnable, but no algorithm can obtain an additive approximation better than $1/2-o(1)$ for minimizing this family of submodular functions without constraints, using polynomially many samples drawn from any distribution.  
\end{theorem}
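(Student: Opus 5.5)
Since this statement is Theorem~8 of~\cite{balkanski2017minimizing}, the natural plan is to reconstruct the Balkanski--Singer hiding argument. The structure is: build a family of submodular functions whose minimizers are hidden inside a random object that polynomially many samples reveal only with negligible probability, while each function still has a simple explicit form that makes it learnable and exactly minimizable.

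\textbf{Construction.} Fix a ground set $N=[n]$ and split it into a random partition of ``good'' elements $G$ and ``bad'' elements $B$, with $|G|=|B|=n/2$. Each function is indexed by the hidden partition, and I would aim for the form
\[
f_G(S)=\tfrac{1}{2}+\tfrac{1}{n}\bigl(|S\cap B|-|S\cap G|\bigr),
\]
possibly with a concave correction $h$ of $|S\cap B|-|S\cap G|$ that becomes constant on typical sizes. Such a function is modular, or modular composed with concave, so it is submodular and $[0,1]$-bounded. Its minimizer is $S=G$, with value close to $0$.

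\textbf{Learnability and minimizability.} Minimizability is immediate, since the minimizer is $G$ itself, and general submodular minimization is polynomial anyway~\cite{grotschel2012geometric}. For PAC learnability in the sense of Definition~\ref{def-pac}, the plan is the following:
\begin{itemize}
\item Observe that the class is parameterized by $O(n)$ parameters, namely a linear function (a $\pm1$ weight vector) composed with a fixed map.
\item Fit it by least squares or ERM.
\item Bound the expected absolute loss by standard uniform-convergence bounds for bounded linear predictors, which gives the stated $m\in O(n^3+n^2\log(2n/\delta)/\epsilon^2)$.
\end{itemize}

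\textbf{Main obstacle: indistinguishability.} I must show that, for \emph{any} distribution $\mathcal D$, polynomially many samples $S_1,\dots,S_m$ reveal almost nothing about $G$. The step I would try first is to replace the exact modular form with a version that is flat on the sets the samples can actually hit. Set $f_G(S)=\tfrac12$ whenever $\bigl||S\cap G|-|S\cap B|\bigr|\le n^{1/2+\epsilon'}$. For a uniformly random partition and any fixed set $S$, Chernoff bounds make this event occur with probability $1-e^{-\Omega(n^{2\epsilon'})}$, unless $|S|$ is so small or so structured that the value is determined anyway. Making that exception precise is the delicate part.

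There is a complication: the function must drop to near $0$ only on sets very aligned with $G$. A union bound over $m=\mathrm{poly}(n)$ samples then shows that, with high probability, every sample has value exactly determined independently of $G$. Conditional on the samples, the posterior over $G$ is therefore nearly uniform.

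\textbf{Lower bound.} Finally I would apply Yao's principle. Any algorithm outputs a set $\tilde S$ that is independent of the random $G$. The same concentration argument then shows $f_G(\tilde S)\ge \tfrac12-o(1)$ with high probability, while $\min f_G= o(1)$. This yields the additive gap $1/2-o(1)$.

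The delicate point is keeping submodularity while introducing the flat plateau. I would handle it by writing $f=g(|S\cap G|,|S\cap B|)$ with $g$ piecewise linear and concave in the appropriate direction, and checking diminishing returns separately for adding a good element and adding a bad element.
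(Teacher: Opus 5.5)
Your construction fails at its core step. A flat plateau on balanced sets cannot coexist with submodularity and a small minimum.

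Let $f$ be submodular with $f(S)=\tfrac12$ whenever $\bigl||S\cap G|-|S\cap B|\bigr|\le t$, for any $t\ge 1$ (your $t=n^{1/2+\epsilon'}$ qualifies). Take any $S$ with $a=|S\cap G|\ge b=|S\cap B|$. Let $S_0\subseteq S$ consist of $S\cap B$ together with $b$ elements of $S\cap G$, and let $g_1,\dots,g_{a-b}$ be the remaining elements of $S\cap G$. Pick distinct $b'_1,\dots,b'_{a-b}\in B\setminus S$, which is possible since $|B\setminus S|=n/2-b\ge a-b$. Set $S_i=S_0\cup\{g_1,\dots,g_i\}$ and $W_i=S_i\cup\{b'_1,\dots,b'_i\}\supseteq S_i$. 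Diminishing returns gives
\[
f(S_{i+1})-f(S_i)\;\ge\; f(W_i\cup\{g_{i+1}\})-f(W_i)\;=\;\tfrac12-\tfrac12\;=\;0,
\]
because $W_i$ and $W_i\cup\{g_{i+1}\}$ have imbalance $0$ and $1$. Summing gives $f(S)\ge f(S_0)=\tfrac12$, and the case $b>a$ is symmetric. So every function with your plateau satisfies $f\ge\tfrac12$ everywhere, in particular $f(G)\ge\tfrac12$. The minimum is then $\tfrac12$, attained by $\emptyset$, and offline minimization is trivial.

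The same two-type chain argument rules out your proposed repairs:
\begin{itemize}
\item A function $h(|S\cap B|-|S\cap G|)$ is submodular only if $h$ is affine on its range. Adding a bad element and adding a good element move the argument in opposite directions, so diminishing returns forces $h$ to be both concave and convex. Hence ``modular composed with concave'' is not submodular here, because your weights have mixed signs.
\item Letting the plateau value depend on size, $f(S)=\phi(|S|)$ on balanced sets, forces $\phi$ to be concave. Running the chain through $W_j=\{g_1,\dots,g_j,b_1,\dots,b_j\}$ then gives $f(G)\ge\phi(0)+\sum_j\bigl(\phi(2j+1)-\phi(2j)\bigr)\ge\frac{f(\emptyset)+f(N)}{2}$. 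That value is exactly what the sample-free algorithm ``output $\emptyset$ or $N$ uniformly at random'' achieves in expectation.
\end{itemize}

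So the missing piece is not a technicality in verifying diminishing returns. A hard family cannot make the sample values independent of the hidden structure on balanced sets, because balanced supersets of a growing piece of the minimizer always lower-bound its value through submodularity. The paper only cites this theorem, but its summary of the Balkanski--Singer construction points the other way: the functions are built from a partition of $\{1,\dots,n\}$, samples do reveal most of that partition (which is also the source of PAC learnability), and only a critical part that determines the optimum stays indistinguishable. Relatedly, the quoted $O(n^3+n^2\log(2n/\delta)/\epsilon^2)$ bound belongs to their specific family and learner. It does not follow from generic uniform convergence for $\pm1$ linear predictors, and in any case your family is not of that form once the plateau is added. Your Chernoff, union-bound and Yao skeleton is fine once a valid construction is in hand, but the construction is where the whole theorem lives, and yours provably has no gap.
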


Balkanski and Singer~\cite{balkanski2017sample} also proved that in general, the number of samples required to obtain a non-trivial approximation to the minimum of a convex function is exponential, even when the function is PAC learnable. It is known that convex functions can be efficiently optimized within an arbitrary degree of precision~\cite{nesterov2013introductory}. Balkanski and Singer~\cite{balkanski2017sample} constructed a class of convex functions which is PAC learnable with the sample complexity $m \in O(\epsilon^{-4}\delta^{-1}(n+\log(\delta^{-1})))$. Theorem~\ref{theo-negative-convex} shows that no algorithm can obtain an additive approximation better than $1/2-o(1)$ for minimizing this class of convex functions, using polynomially many samples drawn from any distribution. Furthermore, they proved that offline optimizability is impossible even assuming strong convexity and Lipschitz continuity of convex functions.

\begin{theorem}[Theorem~8 of~\cite{balkanski2017sample}]\label{theo-negative-convex}
There exists a class of convex functions $\mathcal{F}$ (where $f: [0,1]^n \rightarrow [0,1]$ for $f \in \mathcal{F}$) that can be efficiently minimized in polynomial time and is PAC learnable, but no algorithm can obtain an additive approximation better than $1/2-o(1)$ for minimizing this class of convex functions, using polynomially many samples drawn from any distribution.  
\end{theorem}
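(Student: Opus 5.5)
The plan follows the construction of Balkanski and Singer~\cite{balkanski2017sample}. We build a family of convex functions on $[0,1]^n$ that all agree, up to a negligible error, on every point that a polynomial number of samples can realistically hit. At the same time, their minimizers lie in regions that differ from one function to the next and that the samples essentially never reach. Concretely, we fix a partition of the coordinates and a hidden random direction or subset $S$. We define
\[
f_S(\bm{x}) = \max\Bigl\{\tfrac12 - \tfrac{1}{2}g_S(\bm{x}),\; h(\bm{x})\Bigr\},
\]
clipped to $[0,1]$. Here $h$ is a common convex piece that dominates on the typical support of $\mathcal{D}$. The linear term $g_S(\bm{x})=\tfrac{1}{|S|}\sum_{i\in S}x_i - \tfrac{1}{n}\sum_i x_i$, suitably normalized, becomes large only in a thin region that depends on $S$. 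The function $f_S$ is convex as a maximum of affine and convex pieces. Its minimum is near $0$, attained where $g_S$ is maximal. Elsewhere it stays near $1/2$ or above.

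The argument has three steps.

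\textbf{Step 1: efficient minimization.} Each $f_S$ is an explicit convex Lipschitz function, so a standard first-order method minimizes it to any precision in polynomial time~\cite{nesterov2013introductory}.

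\textbf{Step 2: PAC learnability.} For any $\mathcal{D}$, we show that $\Pr_{\bm{x}\sim\mathcal{D}}[f_S(\bm{x})\ne h(\bm{x})]$ is small for all but a negligible fraction of $S$. The region where $f_S$ differs from $h$ is a thin slab around a hidden direction, and its $\mathcal{D}$-mass averaged over random $S$ is exponentially small by a concentration argument (Hoeffding on $g_S$). On the remaining region the class has the low complexity of a one-parameter family. An empirical risk minimizer over this class therefore achieves absolute loss $\epsilon$ with the stated sample complexity $O(\epsilon^{-4}\delta^{-1}(n+\log\delta^{-1}))$, via a uniform convergence or covering bound.

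\textbf{Step 3: the sample-based lower bound.} This is done with Yao's principle: we draw $S$ uniformly at random. By the same concentration, with probability $1-m\cdot 2^{-\Omega(n)}=1-o(1)$ every one of the $m=\mathrm{poly}(n)$ samples satisfies $f_S(\bm{x}_i)=h(\bm{x}_i)$. The samples are then independent of $S$, so the algorithm's output $\tilde{\bm{x}}$ is independent of $S$. Applying concentration once more to the fixed point $\tilde{\bm{x}}$ gives $g_S(\tilde{\bm{x}})=o(1)$ with high probability. Hence $f_S(\tilde{\bm{x}})\ge 1/2-o(1)$ while $\min f_S = o(1)$, which yields the additive gap $1/2-o(1)$.

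The main obstacle is the quantifier ``any distribution'' in Step 3. The slab must carry negligible mass under every $\mathcal{D}$, averaged over $S$, and not just under the uniform distribution. The concentration must also be applied to each fixed point separately rather than to a distribution-specific average. I would first try to prove the pointwise bound $\Pr_S[g_S(\bm{x})\ge t]\le 2^{-\Omega(n)}$ for every $\bm{x}\in[0,1]^n$. A union bound over the $m$ samples and Fubini then finish the argument, regardless of $\mathcal{D}$. Choosing the thresholds so that this pointwise bound holds while the minimum of $f_S$ still reaches $o(1)$ is the delicate calibration.
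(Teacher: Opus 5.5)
There is a genuine gap. It sits at the step you flag as the main obstacle, but it is structural and cannot be fixed by calibrating thresholds.

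\textbf{Why the construction fails for some distribution.} In your family $f_S=\max\{\tfrac12-\tfrac12 g_S,\,h\}$, the common piece $h$ is a lower bound for every $f_S$. So $\min f_S=f_S(\bm 1_S)=o(1)$ forces $h(\bm 1_S)=o(1)$ for every $S$ in the support. Since $h$ is convex and does not depend on $S$, it is then $o(1)$ on all of $\mathrm{conv}\{\bm 1_S:|S|=k\}$, which is the hypersimplex $\{\bm x\in[0,1]^n:\sum_i x_i=k\}$. Near $\bar{\bm x}=\frac kn\bm 1$ on this set we have $g_S\approx 0$, so $f_S$ coincides with the affine piece and exposes $\langle\bm 1_S,\bm x\rangle$ exactly.

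Concretely, take $\mathcal D$ uniform on $\{\bar{\bm x}+\eta(\bm e_i-\bm e_j):i\neq j\}$ for a small $\eta>0$. Each sample returns exactly $\frac12-\frac{\eta}{2k}\bigl((\bm 1_S)_i-(\bm 1_S)_j\bigr)$. After $O(n\log n)$ samples the random pair graph is connected, so you learn all differences $(\bm 1_S)_i-(\bm 1_S)_1$, and since $0<k<n$ these determine $\bm 1_S$. Outputting $\bm 1_S$ then gives additive error $o(1)$, so your class is offline optimizable over this $\mathcal D$.

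\textbf{What goes wrong in Step 3.} Your claim that every sample satisfies $f_S(\bm x_i)=h(\bm x_i)$ fails here. The pointwise bound on $\Pr_S[g_S(\bm x)\ge t]$ controls the wrong event. Indistinguishability needs $h(\bm x_i)\ge\frac12-\frac12 g_S(\bm x_i)$, and no concentration of $g_S$ gives that where $h$ is small. Moreover, even tiny values of $g_S(\bm x_i)$ leak $S$ once the affine piece is active, because sample values are observed exactly. Letting $h$ depend on $\mathcal D$ would not help, since the smallness of $h$ on the hypersimplex follows from convexity and $\min f_S=o(1)$ alone. Separately, Hoeffding gives only $\exp(-\Omega(kt^2))$. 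Your $2^{-\Omega(n)}$ would therefore need $k=\Omega(n)$, which makes $\min f_S\ge k/(2n)=\Omega(1)$; a superpolynomially small bound would have sufficed, though.

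\textbf{What is missing.} You need a construction with two properties. First, for every fixed $\mathcal D$, the exactly observed sample values must be, with high probability, independent of the hidden structure. Second, $f_S$ must stay near $1/2$ at every point chosen independently of $S$, including the centroid of the minimizers. A single hidden linear statistic maxed with a common convex floor cannot do this, for the reason above. The paper does not reprove the statement; it cites Balkanski and Singer, whose partition-based construction is built to achieve exactly this. That indistinguishability argument is what you would have to reproduce.

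\textbf{Smaller issues.} Step 2 only handles all but a negligible fraction of $S$, whereas PAC learnability must hold for every $f\in\mathcal F$. For the $\mathcal D$ above, $f_S\neq h$ almost surely. Your class is finite with $\log|\mathcal F|=O(k\log n)$, so ERM gives PAC learnability directly, but not by your argument. The rate $O(\epsilon^{-4}\delta^{-1}(n+\log\delta^{-1}))$ you quote is that of the Balkanski--Singer class and does not follow from your sketch. Finally, clipping from above by $1$ destroys convexity, although here it is unnecessary.
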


The reason why the above functions are not offline optimizable using polynomially many samples drawn from any distribution is similar. The studied class of functions is defined based on partitions of the set of all dimensions $\{1,2,\ldots,n\}$, and their optima crucially depend on the partitions. The hardness arises from the fact that some critical parts of the partitions (and thus optimal solutions) cannot be distinguished from polynomially many samples with high probability, while most parts of the partitions can be learned well (and thus the value of almost all solutions can be estimated well), which explains why these functions are learnable but not offline optimizable.

\subsection{Algorithm-dependent Learnability}

The negative results introduced in Section~\ref{subsec-pac-insufficient} show that no reasonable approximation is guaranteed when optimizing a function that has been learned from data, even if this function is both optimizable and PAC/PMAC learnable. That is, PAC/PMAC learnability is insufficient for offline optimization, i.e., to obtain reasonable approximation guarantees when optimizing a function that is learned from data. Thus, an interesting question is what learnability can guarantee reasonable approximations for offline optimization.

There have been some attempts in this direction by making strong assumptions or relaxing the approximation requirement~\cite{balkanski2022limitations,chen2020optimization,rosenfeld2018learning}. Balkanski et al.~\cite{balkanski2022limitations} proposed the strong concept of \textit{recoverability}, which requires a function to be learned everywhere within an approximation of $1\pm 1/n^2$ from samples, i.e., the learned function $\tilde{f}$ satisfies that for every solution $\bm{x}$, $(1-1/n^2)\cdot f(\bm{x}) \leq \tilde{f}(\bm{x}) \leq (1+1/n^2)\cdot f(\bm{x})$ with high probability. Under recoverability, the problem of monotone submodular maximization with a size constraint (and also the specific instance, maximum coverage) is $(1-1/e-o(1))$-offline-optimizable. However, the condition of recoverability is too strong to hold in practice, which is also shown not necessary in~\cite{balkanski2022limitations} since unit demand functions are offline optimizable but not recoverable. 

Chen et al.~\cite{chen2020optimization} circumvented the impossibility result of offline optimization for maximum coverage by proposing a stronger model called \textit{optimization from structured samples}, where the data samples encode the structural information of the functions, i.e., the samples could reveal the covered elements rather than just the number of covered elements. Under certain assumptions on the sample distribution, they designed an algorithm that achieves a constant approximation for offline optimization of the maximum coverage problem. However, the applicability of this strong model is limited because often only the values of the functions (rather than the structural information of the functions) are available in practice.

To circumvent the difficulty of offline optimization, Rosenfeld et al.~\cite{rosenfeld2018learning} proposed a new model called \textit{distributional optimization from samples} by replacing the approximation of the whole search space with that of a sampled subspace and using a distribution-agnostic notion of approximation. The authors established a tight equivalence between this model and PMAC learnability: A function class is distributional optimization from samples if and only if it is PMAC learnable. However, the approximation concerned in practice is usually with respect to the whole search space rather than a sampled subspace.

In this section, we propose a new learning concept called \textit{algorithm-dependent learnability} that can provide approximation guarantees when optimizing learned functions. As presented in Definition~\ref{def-algo-learnability}, given an algorithm $\mathcal{A}$ for solving a class $\mathcal{F}$ of functions, it requires that with high probability, the functions can be learned approximately well (i.e., the predicted function value $\tilde{f}(\bm{x})$ is bounded within $(1\pm \beta)$ of the true function value $f(\bm{x})$) along the trajectories of the algorithm $\mathcal{A}$ running on the learned functions. Note that a solution visited by $\mathcal{A}$ on the learned function $\tilde{f}$ in Eq.~(\refeq{eq-algo-learnability}) means that the solution is evaluated during the process of running $\mathcal{A}$ on $\tilde{f}$. Compared to recoverability~\cite{balkanski2022limitations}, the proposed algorithm-dependent learnability does not require that functions are learnable everywhere from samples.

\begin{definition}[Algorithm-dependent Learnability]\label{def-algo-learnability}
Let $\mathcal{A}$ be an algorithm for solving a class $\mathcal{F}$ of functions. Then, $\mathcal{F}$ is algorithm-dependent learnable with respect to $\mathcal{A}$ on distribution $\mathcal{D}$ if there exists a learning algorithm such that $\forall f \in \mathcal{F}$ and $\delta>0$, when running the learning algorithm on a polynomial number of samples $\{\bm{x}_i,f(\bm{x}_i)\}^m_{i=1}$ drawn i.i.d. from the distribution $\mathcal{D}$, the algorithm returns a function $\tilde{f}$: 
\begin{equation}
\begin{aligned}\label{eq-algo-learnability}
\mathrm{Pr}_{\bm{x}_1,\ldots,\bm{x}_m \sim \mathcal{D}}\left[ \forall \bm{x} \; \text{visited by running} \; \mathcal{A} \;\text{on} \; \tilde{f}: (1-\beta) \cdot f(\bm{x})\leq \tilde{f}(\bm{x})\leq  (1+\beta)\cdot f(\bm{x}) \right]\geq 1-\delta,
\end{aligned}
\end{equation}
where $\beta \in [0,1)$, and $m$ is polynomial in $1/\delta$ and the size of $f \in \mathcal{F}$ (i.e., the dimensionality of $\bm{x} \in \mathcal{D}$).
\end{definition}

Definition~\ref{def-algo-learnability} is tailored to algorithms whose decisions depend on function-value comparisons. For a first-order optimizer, the queried gradients, rather than the absolute function values, determine the trajectory. We therefore use the following first-order analogue, which controls gradient accuracy only at the points where the learned function is queried by the optimizer.

\begin{definition}[First-order Algorithm-dependent Learnability]\label{def-first-order-algo-learnability}
Let $\mathcal{A}$ be a first-order algorithm whose trajectory is determined by gradient queries for solving a class $\mathcal{F}$ of functions that are differentiable on an open neighborhood of a domain $\mathcal{X}$. Then, $\mathcal{F}$ is first-order algorithm-dependent learnable with respect to $\mathcal{A}$ on distribution $\mathcal{D}$ if there exists a learning algorithm such that $\forall f \in \mathcal{F}$ and $\delta>0$, when running the learning algorithm on a polynomial number of samples $\{\bm{x}_i,f(\bm{x}_i)\}^m_{i=1}$ drawn i.i.d. from the distribution $\mathcal{D}$, the algorithm returns a function $\tilde{f}$ that is differentiable on the open neighborhood of $\mathcal{X}$, and satisfies 
\begin{equation}
\begin{aligned}\label{eq-first-order-algo-learnability}
\mathrm{Pr}_{\bm{x}_1,\ldots,\bm{x}_m \sim \mathcal{D}}\left[
\forall \bm{x}\;\text{at which $\mathcal{A}$ queries a gradient when solving $\tilde{f}$}:\;
\|\nabla\tilde{f}(\bm{x})-\nabla f(\bm{x})\|\leq\zeta_g
\right]\geq 1-\delta,
\end{aligned}
\end{equation}
where $\zeta_g\geq0$, and $m$ is polynomial in $1/\delta$ and the size of $f\in\mathcal{F}$.
\end{definition}

Next, we show that the value-query condition in Definition~\ref{def-algo-learnability} is sufficient for two discrete settings: monotone approximately submodular maximization with a size constraint by a greedy algorithm (Section~\ref{subsec-submodular-greedy}) and unconstrained non-monotone submodular maximization by local search (Section~\ref{subsec-submodular-local-search}). For convex minimization by projected gradient descent (Section~\ref{subsec-convex-gradient}), the first-order condition in Definition~\ref{def-first-order-algo-learnability} is sufficient. By running an existing optimizer on the learned function \(\tilde f\), algorithm-dependent learnability can transfer the optimizer’s query-wise decision relations from \(\tilde f\) to the unknown objective \(f\), then the structural properties of \(f\), such as approximate submodularity or convexity, convert these local relations into a global approximation guarantee. The three cases instantiate the same principle at the level of the information consumed by each optimizer: reliable values for value-query algorithms and reliable gradients for first-order algorithms.

\subsubsection{Sufficient for Offline Size-constrained Monotone Approximately Submodular Maximization with Greedy Algorithms}\label{subsec-submodular-greedy}

First, we study the problem of monotone approximately submodular maximization with a size constraint, as presented in Definition~\ref{def-submodular-constraints}. Given a finite set $V=\{v_1,v_2,\ldots,v_n\}$, we consider the functions $f:2^V \rightarrow \mathbb{R}$ defined on subsets of $V$, where $\mathbb{R}$ denotes the set of reals. A set function $f:2^V \rightarrow \mathbb{R}$ is monotone if $\forall X \subseteq Y$, $f(X) \leq f(Y)$. We assume without loss of generality that monotone functions are normalized, i.e., $f(\emptyset)=0$; thus, we have $\forall X \subseteq V$, $f(X)\geq 0$. A set function $f$ is submodular~\cite{nemhauser1978analysis} if it satisfies the ``diminishing returns" property, i.e., $\forall X \subseteq Y \subseteq V$ and $v \notin Y$,
\begin{align}\label{def-submodular-1}
f(X \cup \{v\})-f(X) \geq f(Y \cup \{v\}) - f(Y);
\end{align}
or equivalently $\forall X \subseteq Y \subseteq V$,
\begin{align}\label{def-submodular-2}
f(Y)-f(X) \leq \sum\nolimits_{v \in Y \setminus X} \big(f(X \cup \{v\})-f(X)\big),
\end{align}
which implies that the increment in $f$ by adding a set of elements to a set $X$ is not larger than the sum of increments of adding its individual elements to $X$. For a monotone but not necessarily submodular function $f$, the submodularity ratio as presented in Definition~\ref{def-approx-submodular} was introduced to measure the closeness of $f$ to submodularity, i.e., to what extent $f$ has the submodular property. We can see that the submodularity ratio is defined based on Eq.~(\refeq{def-submodular-2}); $\gamma_{X,l}(f)\in [0,1]$, and $f$ is submodular if and only if $\gamma_{X,l}(f) = 1$ for any $X$ and $l$. In this paper, we will use $\gamma_{X,l}$ for short when the meaning of $f$ is clear. 

\begin{definition}[Submodularity Ratio~\cite{das2011submodular,qian.nips15}]\label{def-approx-submodular}
Let $f: 2^V \rightarrow \mathbb{R}$ be a set function. The submodularity ratio of $f$ with respect to a set $X \subseteq V$ and a parameter $l \geq 1$ is
$$
\gamma_{X,l}(f)=\min_{L \subseteq X, S: |S|\leq l, S \cap L =\emptyset} \frac{\sum_{v \in S} (f(L \cup \{v\})-f(L))}{f(L \cup S)-f(L)}.
$$
\end{definition}

As presented in Definition~\ref{def-submodular-constraints}, the studied problem is to find a subset $X \subseteq V$ maximizing a monotone approximately submodular function $f$ with a size constraint $|X| \leq k$. It is generally NP-hard, and has many applications, e.g., maximum coverage~\cite{feige1998threshold}, influence maximization~\cite{kempe2003maximizing}, sparse regression~\cite{das2011submodular}, unsupervised feature selection~\cite{feng2019unsupervised}, and human assisted learning~\cite{liu2023human}. Note that the objective functions of the first two applications are exactly submodular (i.e., $\forall X, l: \gamma_{X,l}(f) = 1$), while those of the last three ones are approximately submodular.

\begin{definition}[Monotone Approximately Submodular Function Maximization with a Size Constraint]\label{def-submodular-constraints}
Given a monotone and approximately submodular function $f: 2^V \rightarrow \mathbb{R}^+$ (where $\mathbb{R}^{+}$ denotes the set of non-negative reals) and a budget $k$, the goal is to find a subset $X^*\subseteq V$ such that
$$
X^*\in \arg \max\nolimits_{X \subseteq V, |X|\le k} f(X).
$$
\end{definition}

It has been shown that the greedy algorithm (presented in Algorithm~\ref{alg:Greedy}) achieves an approximation ratio of $1-e^{-\gamma_{X_k,k}}$, i.e., finds a subset $X_k$ with $f(X_k) \geq (1-e^{-\gamma_{X_k,k}})\cdot \max_{X \subseteq V: |X| \leq k}f(X)$~\cite{das2011submodular}. Furthermore, this approximation ratio has been proved to be optimal~\cite{harshaw2019submodular}. The process of the greedy algorithm is straightforward, which iteratively selects one element with the largest marginal gain on $f$ (i.e., $v^*\in\arg\max_{v \in V \setminus X_i} f(X_i \cup \{v\})$ in lines~3--4 of Algorithm~\ref{alg:Greedy}) until $k$ elements are selected.

\begin{algorithm}[htbp]
\caption{Greedy Algorithm~\cite{das2011submodular}}
\label{alg:Greedy}
\textbf{Input}: all elements $V=\{v_1,\ldots,v_n\}$, objective function $f$, and budget $k$\\
\textbf{Output}: a subset of $V$ with $k$ elements\\
\textbf{Process}:
\begin{algorithmic}[1]
\STATE Let $i=0$ and $X_i=\emptyset$;
\STATE \textbf{repeat}
\STATE \quad Let $v^*\in\arg\max_{v \in V \setminus X_i} f(X_i \cup \{v\})$;
\STATE \quad Let $X_{i+1}=X_{i} \cup \{v^*\}$, and $i=i+1$
\STATE \textbf{until} $i=k$
\STATE \textbf{return} $X_k$
\end{algorithmic}
\end{algorithm}

Theorem~\ref{theo-submodular-constrained} shows that the problem of monotone approximately submodular maximization with a size constraint is $\alpha$-offline-optimizable (Definition~\ref{def-offline-optimizable}) when the algorithm-dependent learnablity (Definition~\ref{def-algo-learnability}) with respect to the greedy algorithm holds. Note that a solution in Definitions~\ref{def-offline-optimizable} and~\ref{def-algo-learnability} is represented by a vector $\bm{x}$ whereas here it is represented by a subset $X$ of $V$; there is no inconsistency because a subset $X$ of $V$ can be naturally represented by a Boolean vector $\bm{x} \in \{0,1\}^n$, where the $i$-th bit $x_i=1$ means that $v_i \in X$, and $x_i=0$ means that $v_i \notin X$. The approximation ratio $\alpha=\frac{\frac{1-\beta}{1+\beta}\frac{\gamma_{X_k,k}}{k}}{1-\frac{1-\beta}{1+\beta}\left(1-\frac{\gamma_{X_k,k}}{k}\right)}
\left(1-\left(\frac{1-\beta}{1+\beta}\right)^{k}\left(1-\frac{\gamma_{X_k,k}}{k}\right)^k\right)$. When the objective function $f$ is exactly submodular, $\gamma_{X_k,k}=1$, and thus the approximation ratio $\alpha$ is specialized to $\frac{\frac{1-\beta}{1+\beta}\frac{1}{k}}{1-\frac{1-\beta}{1+\beta}\left(1-\frac{1}{k}\right)}
\left(1-\left(\frac{1-\beta}{1+\beta}\right)^{k}\left(1-\frac{1}{k}\right)^k\right)$. When $\beta$ is further set to $1/n^2$ as in recoverability~\cite{balkanski2022limitations}, $\alpha$ becomes $1-1/e-o(1)$, which is consistent with the result for the specific case, maximum coverage, under the strong learnability concept of recoverability (which requires a function to be learned everywhere within an approximation of $1\pm 1/n^2$ from samples)~\cite{balkanski2022limitations}. When $\beta=0$, the approximation ratio $\alpha$ recovers that, $1-(1-\frac{\gamma_{X_k,k}}{k})^k$, achieved by the greedy algorithm under the value oracle model where each solution can be exactly evaluated~\cite{das2011submodular}.  

\begin{theorem}\label{theo-submodular-constrained}
If the problem of monotone approximately submodular maximization with a size constraint is algorithm-dependent learnable with respect to the greedy algorithm on distribution $\mathcal{D}$, then it is $\alpha$-offline-optimizable over $\mathcal{D}$, where
$$
\alpha=\frac{\frac{1-\beta}{1+\beta}\frac{\gamma_{X_k,k}}{k}}{1-\frac{1-\beta}{1+\beta}\left(1-\frac{\gamma_{X_k,k}}{k}\right)}
\left(1-\left(\frac{1-\beta}{1+\beta}\right)^{k}\left(1-\frac{\gamma_{X_k,k}}{k}\right)^k\right).
$$
That is, Eq.~(\refeq{eq-offline-opt-multi}) holds with this $\alpha$.
\end{theorem}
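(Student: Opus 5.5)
The plan is to run the greedy algorithm (Algorithm~\ref{alg:Greedy}) on the learned function $\tilde f$ returned by the learning algorithm of Definition~\ref{def-algo-learnability}, and to output its final set $X_k$. By Definition~\ref{def-algo-learnability}, with probability at least $1-\delta$ over the polynomially many samples, every set evaluated during this run satisfies $(1-\beta)f(X)\le \tilde f(X)\le (1+\beta)f(X)$. The sets evaluated in iteration $i$ are exactly $X_i\cup\{v\}$ for all $v\in V\setminus X_i$, and these include the chosen $X_{i+1}$. We work on this event throughout, so it suffices to show deterministically that $f(X_k)\ge \alpha\cdot \mathrm{OPT}$, where $\mathrm{OPT}=\max_{|X|\le k}f(X)$. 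Write $c=\frac{1-\beta}{1+\beta}\in(0,1]$, which is positive since $\beta<1$.

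\emph{Step 1 (transfer of the greedy choice).} Since $X_{i+1}$ maximizes $\tilde f(X_i\cup\{v\})$, for every $v\in V\setminus X_i$ we get
\begin{equation}
f(X_{i+1})\ge \tfrac{1}{1+\beta}\tilde f(X_{i+1})\ge \tfrac{1}{1+\beta}\tilde f(X_i\cup\{v\})\ge c\, f(X_i\cup\{v\}).
\end{equation}
Hence $f(X_{i+1})\ge c\max_{v\in V\setminus X_i} f(X_i\cup\{v\})$.

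\emph{Step 2 (submodularity-ratio progress).} Let $X^*$ be an optimal solution. Apply Definition~\ref{def-approx-submodular} with $L=X_i$ and $S=X^*\setminus X_i$, noting $|S|\le k$. Together with monotonicity this gives
\begin{equation}
\mathrm{OPT}-f(X_i)\le f(X_i\cup X^*)-f(X_i)\le \tfrac{1}{\gamma_{X_i,k}}\sum_{v\in S}\bigl(f(X_i\cup\{v\})-f(X_i)\bigr)\le \tfrac{k}{\gamma_{X_i,k}}\Bigl(\max_{v\in V\setminus X_i}f(X_i\cup\{v\})-f(X_i)\Bigr).
\end{equation}
Since $X_i\subseteq X_k$, the minimum defining $\gamma_{X_k,k}$ ranges over a superset of the pairs $(L,S)$ used for $\gamma_{X_i,k}$, so $\gamma_{X_i,k}\ge\gamma_{X_k,k}=:\gamma$. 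The right-hand bracket is nonnegative by monotonicity, so we may replace $\gamma_{X_i,k}$ by $\gamma$. Combining with Step 1 yields the recursion
\begin{equation}
f(X_{i+1})\ge c\Bigl(1-\tfrac{\gamma}{k}\Bigr)f(X_i)+\tfrac{c\gamma}{k}\mathrm{OPT}.
\end{equation}

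\emph{Step 3 (unrolling).} Start from $f(X_0)=f(\emptyset)=0$ and iterate the recursion $k$ times. This gives $f(X_k)\ge \tfrac{c\gamma}{k}\mathrm{OPT}\sum_{j=0}^{k-1}\bigl(c(1-\gamma/k)\bigr)^j$. Summing the geometric series gives exactly the stated $\alpha$; the ratio $c(1-\gamma/k)$ is less than $1$ whenever $\gamma>0$, so the closed form is valid.

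The main obstacle is Step 2, specifically the bookkeeping that makes the argument use the ratio $\gamma_{X_k,k}$ at the \emph{output} set, which is itself defined through the run on $\tilde f$. The fix is the monotonicity of $\gamma_{X,l}$ in $X$ together with the nonnegativity of the marginal-gain bracket. A second point to check is that Step 1 only compares sets that greedy actually queried, so the learnability guarantee on visited points is all that is needed.
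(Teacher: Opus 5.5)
Your proposal is correct and follows essentially the same argument as the paper: run greedy on $\tilde f$, invoke the learnability bounds only at the queried sets $X_i\cup\{v\}$ and $X_{i+1}$, use monotonicity and the submodularity ratio (with $\gamma_{X_i,k}\ge\gamma_{X_k,k}$) to get the recursion $f(X_{i+1})\ge c\bigl((1-\gamma/k)f(X_i)+(\gamma/k)f(X^*)\bigr)$, and unroll it from $f(\emptyset)=0$. The only cosmetic difference is that you first isolate the transfer step $f(X_{i+1})\ge c\max_{v} f(X_i\cup\{v\})$, whereas the paper interleaves it into a single chain of inequalities.
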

\begin{proof}
By the algorithm-dependent learnability with respect to the greedy algorithm, we know from Definition~\ref{def-algo-learnability} that there is a learning algorithm which can return a function $\tilde{f}$ with probability $1-\delta$: 
\begin{equation}
\begin{aligned}\label{proof-eq-algo-learnability}
 \forall X \; \text{visited by running the greedy algorithm on} \; \tilde{f}: (1-\beta) \cdot f(X)\leq \tilde{f}(X)\leq  (1+\beta)\cdot f(X),
\end{aligned}
\end{equation}
when getting access to a polynomial number of samples $\{\bm{x}_i,f(\bm{x}_i)\}^m_{i=1}$ drawn i.i.d. from the distribution $\mathcal{D}$. In Eq.~(\refeq{proof-eq-algo-learnability}), $\beta \in [0,1)$, and $f$ denotes the true function.

We consider the greedy algorithm running on the learned function $\tilde{f}$ that satisfies Eq.~(\refeq{proof-eq-algo-learnability}). Let $X^*$ denote an optimal subset, i.e., $f(X^*)= \max_{X \subseteq V: |X| \leq k}f(X)$, and $X_i$ denote the subset after selecting $i$ elements by the greedy algorithm on $\tilde{f}$. We have
\begin{equation}
\begin{aligned}\label{inter-step}
&f(X^*)-f(X_i) \\
&\leq f(X^* \cup X_i)-f(X_i)& [\text{by the monotonicity of $f$}]\\
&\leq \frac{1}{\gamma_{X_i,k}}\sum_{v \in X^*\setminus X_i} \big(f(X_i \cup \{v\})-f(X_i)\big)& [\text{by Definition~\ref{def-approx-submodular} and $|X^*| \leq k$}]\\
&\leq \frac{1}{\gamma_{X_i,k}}\sum_{v \in X^*\setminus X_i} \left(\frac{1}{1-\beta}\cdot \tilde{f}(X_i \cup \{v\})-f(X_i)\right) & [\text{by Eq.~(\refeq{proof-eq-algo-learnability}) as $X_i \cup \{v\}$ is visited by Algorithm~\ref{alg:Greedy}}]\\
&\leq \frac{1}{\gamma_{X_i,k}}\sum_{v \in X^*\setminus X_i} \left(\frac{1}{1-\beta}\cdot \tilde{f}(X_{i+1})-f(X_i)\right) & [\text{by line~3 of Algorithm~\ref{alg:Greedy}}]\\
&\leq \frac{k}{\gamma_{X_k,k}} \left(\frac{1+\beta}{1-\beta}\cdot f(X_{i+1})-f(X_i)\right) & [\text{by $\gamma_{X_i,k} \geq \gamma_{X_{i+1},k}$ and Eq.~(\refeq{proof-eq-algo-learnability})}]
\end{aligned}
\end{equation}
The above equation can be equivalently transformed as
$$
f(X_{i+1}) \geq \left(\frac{1-\beta}{1+\beta}\right)\left(\left(1-\frac{\gamma_{X_k,k}}{k}\right)f(X_i)+\frac{\gamma_{X_k,k}}{k} f(X^*)\right).
$$
Then, by induction, the returned subset $X_k$ by Algorithm~\ref{alg:Greedy} satisfies
$$
f(X_k) \geq \frac{\frac{1-\beta}{1+\beta}\frac{\gamma_{X_k,k}}{k}}{1-\frac{1-\beta}{1+\beta}\left(1-\frac{\gamma_{X_k,k}}{k}\right)}
\left(1-\left(\frac{1-\beta}{1+\beta}\right)^{k}\left(1-\frac{\gamma_{X_k,k}}{k}\right)^k\right) \cdot f(X^*),$$
i.e., Eq.~(\refeq{eq-offline-opt-multi}) in Definition~\ref{def-offline-optimizable} holds with $$\alpha=\frac{\frac{1-\beta}{1+\beta}\frac{\gamma_{X_k,k}}{k}}{1-\frac{1-\beta}{1+\beta}\left(1-\frac{\gamma_{X_k,k}}{k}\right)}
\left(1-\left(\frac{1-\beta}{1+\beta}\right)^{k}\left(1-\frac{\gamma_{X_k,k}}{k}\right)^k\right).$$
\end{proof}

The proof of Theorem~\ref{theo-submodular-constrained} directly follows the proof for the approximation ratio of the greedy algorithm under multiplicative noise~\cite{qian2017subset}, because the condition $(1-\epsilon)\cdot f(X) \leq F(X) \leq (1+\epsilon) \cdot f(X)$ satisfied by the noisy objective function $F(X)$ is similar to Eq.~(\ref{proof-eq-algo-learnability}) satisfied by the learned function $\tilde{f}(X)$ under algorithm-dependent learnability. We can find from the upper bound analysis of $f(X^*)-f(X_i)$ (i.e., Eq.~(\refeq{inter-step})) in the proof of Theorem~\ref{theo-submodular-constrained} that the algorithm-dependent learnability allows the utilization of the algorithm behavior on the learned function $\tilde{f}$ (the fourth `$\leq$' in Eq.~(\refeq{inter-step})) while almost maintaining the required monotone approximately submodular property (the first three and the last `$\leq$' in Eq.~(\refeq{inter-step})).

\subsubsection{Sufficient for Offline Unconstrained Non-monotone Submodular Maximization with Local Search}\label{subsec-submodular-local-search}

Next, we study the problem of non-monotone submodular function maximization without constraints. As presented in Definition~\ref{def-Prob-nonmonotone-1}, it is to find a subset $X\subseteq V$ maximizing a non-monotone and submodular function $f$. We assume without loss of generality that $f$ is non-negative. This problem has many applications such as the classical NP-hard problem, maximum cut~\cite{goemans1995improved}.

\begin{definition}[Non-monotone Submodular Function Maximization without Constraints]\label{def-Prob-nonmonotone-1}
Given a non-monotone and submodular function $f: 2^V \rightarrow \mathbb{R}^+$ (where $\mathbb{R}^{+}$ denotes the set of non-negative reals), the goal is to find a subset $X^*\subseteq V$ such that
$$
X^*\in\arg \max\nolimits_{X \subseteq V} f(X).
$$
\end{definition}

It has been proved that the local search algorithm can achieve an approximation ratio of $1/3-\epsilon/n$~\cite{feige2011maximizing}. As presented in Algorithm~\ref{alg:local-search}, it starts from the best single element (line~1), and repeatedly adds (lines~2--4) or deletes (lines~5--7) one element until the subset cannot be improved by a factor larger than $1+\epsilon/n^2$. Finally, it outputs the better one between the found subset $X$ and its complement $V\setminus X$ in line~8. 

\begin{algorithm}[htbp]
\caption{Local Search Algorithm~\cite{feige2011maximizing}}
\label{alg:local-search}
\textbf{Input}: all elements $V=\{v_1,\ldots,v_n\}$, and objective function $f$\\
\textbf{Output}: a subset of $V$\\
\textbf{Process}:
\begin{algorithmic}[1]
\STATE Let $v^*\in \arg \max_{v \in V} f(v)$, and $X=\{v^*\}$;
\IF { $\exists v \in V \setminus X$ such that $f(X \cup \{v\}) > (1+\frac{\epsilon}{n^2})\cdot f(X)$ }
\STATE $X=X \cup \{v\}$, and repeat from line~2
\ENDIF
\IF { $\exists v \in X$ such that $f(X \setminus \{v\}) > (1+\frac{\epsilon}{n^2})\cdot f(X)$ }
\STATE $X=X \setminus \{v\}$, and repeat from line~2
\ENDIF
\STATE \textbf{return} the better one between $X$ and $V \setminus X$
\end{algorithmic}
\end{algorithm}

Theorem~\ref{theo-submodular-unconstrained} shows that the problem of non-monotone submodular maximization without constraints is $\alpha$-offline-optimizable, when the algorithm-dependent learnability with respect to the local search algorithm is satisfied. The approximation ratio $\alpha=\frac{1-\beta}{1+\beta}\cdot\left(3+2n\left(\frac{1+\beta}{1-\beta}\left(1+\frac{\epsilon}{n^2}\right)-1\right)\right)^{-1}$. When $\beta=0$, $\alpha$ becomes $1/(3+2\epsilon/n)$, which is slightly better than the approximation ratio of $\frac{1}{3}-\frac{\epsilon}{n}$ achieved by local search under the value oracle model where each solution can be exactly evaluated~\cite{feige2011maximizing}. In the proof of Theorem~\ref{theo-submodular-unconstrained}, we also show that the running time complexity required to achieve the approximation ratio is $O\left(\frac{n^3}{\epsilon} \log\left(\frac{1+\beta}{1-\beta}n\right) \right)$.

\begin{theorem}\label{theo-submodular-unconstrained}
If the problem of non-monotone submodular maximization without constraints is algorithm-dependent learnable with respect to the local search algorithm on distribution $\mathcal{D}$, then it is $\alpha$-offline-optimizable over $\mathcal{D}$, where $$\alpha=\frac{1-\beta}{1+\beta}\cdot\left(3+2n\left(\frac{1+\beta}{1-\beta}\left(1+\frac{\epsilon}{n^2}\right)-1\right)\right)^{-1}.$$ That is, Eq.~(\refeq{eq-offline-opt-multi}) holds with such an $\alpha$.
\end{theorem}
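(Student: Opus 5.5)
Write $r=\frac{1+\beta}{1-\beta}$ and $c=r\left(1+\frac{\epsilon}{n^2}\right)$. The plan is to rerun the Feige--Mirrokni--Vondr\'ak analysis of Algorithm~\ref{alg:local-search}~\cite{feige2011maximizing}, but on the learned function $\tilde f$. Each local-optimality condition certified by $\tilde f$ is transferred to $f$ through Eq.~(\ref{eq-algo-learnability}), which costs a factor $r$ each time. By Definition~\ref{def-algo-learnability}, with probability at least $1-\delta$ the learned $\tilde f$ satisfies $(1-\beta)f(X)\le \tilde f(X)\le (1+\beta)f(X)$ for every $X$ evaluated by local search on $\tilde f$. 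This includes every $X\cup\{v\}$ and $X\setminus\{v\}$ examined in lines~2 and~5, and both $X$ and $V\setminus X$ compared in line~8. We condition on this event for the rest of the argument.

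\textbf{Step 1 (approximate local optimality for $f$).} At termination, for all $v\notin X$ we have $\tilde f(X\cup\{v\})\le (1+\frac{\epsilon}{n^2})\tilde f(X)$. Applying the learnability bounds to both sides gives $f(X\cup\{v\})\le c\, f(X)$. The same argument for deletions gives $f(X\setminus\{v\})\le c\,f(X)$ for all $v\in X$.

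\textbf{Step 2 (bounding $f(X\cup X^*)$ and $f(X\cap X^*)$).} Let $X^*$ be an optimal set. Adding the elements of $X^*\setminus X$ one at a time and using submodularity (Eq.~(\ref{def-submodular-2})), $f(X\cup X^*)-f(X)\le\sum_{v\in X^*\setminus X}\big(f(X\cup\{v\})-f(X)\big)\le n(c-1)f(X)$. Removing the elements of $X\setminus X^*$ one at a time, diminishing returns gives $f(X)-f(X\cap X^*)\ge \sum_{v\in X\setminus X^*}\big(f(X)-f(X\setminus\{v\})\big)\ge -n(c-1)f(X)$. The deletion argument is the step where I would be most careful with the order of sets in the submodular chain. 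Adding the two inequalities yields $f(X\cup X^*)+f(X\cap X^*)\le 2\big(1+n(c-1)\big)f(X)$.

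\textbf{Step 3 (combining with the complement).} Submodularity and non-negativity give
\[f(X\cup X^*)+f(V\setminus X)\ge f(X^*\setminus X)+f(V)\ge f(X^*\setminus X)\]
and
\[f(X\cap X^*)+f(X^*\setminus X)\ge f(X^*)+f(\emptyset)\ge f(X^*).\]
Adding these to the bound of Step~2 gives
\[f(X^*)\le \big(3+2n(c-1)\big)\max\{f(X),f(V\setminus X)\}.\]
Line~8 selects by comparing $\tilde f$ values, and both candidates were evaluated, so the returned set $Y$ satisfies $f(Y)\ge \frac1r\max\{f(X),f(V\setminus X)\}$. Together these give exactly the stated $\alpha=\frac{1-\beta}{1+\beta}\left(3+2n(c-1)\right)^{-1}$.

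\textbf{Step 4 (polynomial running time).} Line~1 starts with $\tilde f(\{v^*\})\ge (1-\beta)\max_v f(v)\ge (1-\beta)f(X^*)/n$, where the last inequality uses subadditivity of non-negative submodular $f$. Every later iterate satisfies $\tilde f\le(1+\beta)f(X^*)$. Each move multiplies $\tilde f$ by more than $1+\epsilon/n^2$, so there are at most $O\big(\frac{n^2}{\epsilon}\log(rn)\big)$ moves, each needing $O(n)$ evaluations, for a total of $O\big(\frac{n^3}{\epsilon}\log(rn)\big)$. The main obstacle is making sure every set used in the transfer (all neighbours examined, the initial singletons, and $V\setminus X$) is indeed visited. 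Otherwise Eq.~(\ref{eq-algo-learnability}) cannot be invoked. I would state explicitly that the algorithm evaluates all neighbours when checking lines~2 and~5.
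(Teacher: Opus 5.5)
Your proposal is correct and follows essentially the same route as the paper. Both arguments transfer the $\tilde f$-local-optimality of $X_{\mathrm{local}}$ to $f$ at a cost of $\frac{1+\beta}{1-\beta}\left(1+\frac{\epsilon}{n^2}\right)$, bound $f(X_{\mathrm{local}}\cap X^*)$ and $f(X_{\mathrm{local}}\cup X^*)$ by submodular chains, combine these with $f(V\setminus X_{\mathrm{local}})$ through the same two submodularity inequalities, and lose one more factor $\frac{1-\beta}{1+\beta}$ at line~8; the differences are only cosmetic (the paper does the value transfer inside each chain step for general $S\subseteq X_{\mathrm{local}}$ or $S\supseteq X_{\mathrm{local}}$ rather than up front, and in the running-time count it bounds $f(X_{\mathrm{local}})\le n f(v^*)$ instead of using $f(X^*)$).
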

\begin{proof}
By the algorithm-dependent learnability with respect to the local search algorithm, we know from Definition~\ref{def-algo-learnability} that there is a learning algorithm which can return a function $\tilde{f}$ with probability $1-\delta$: 
\begin{equation}
\begin{aligned}\label{proof-eq-algo-learnability-local}
 \forall X \; \text{visited by running local search on} \; \tilde{f}: (1-\beta) \cdot f(X)\leq \tilde{f}(X)\leq  (1+\beta)\cdot f(X),
\end{aligned}
\end{equation}
when getting access to a polynomial number of samples $\{\bm{x}_i,f(\bm{x}_i)\}^m_{i=1}$ drawn i.i.d. from the distribution $\mathcal{D}$. In Eq.~(\refeq{proof-eq-algo-learnability-local}), $\beta \in [0,1)$, and $f$ denotes the true function. Note that a solution visited by local search (i.e., Algorithm~\ref{alg:local-search}) actually means that it is evaluated during the process of running Algorithm~\ref{alg:local-search}.

We consider the local search algorithm running on the learned function $\tilde{f}$ that satisfies Eq.~(\refeq{proof-eq-algo-learnability-local}). Let $X_{\mathrm{local}}$ denote the local optimal solution obtained after finishing the run of lines~1--7 of Algorithm~\ref{alg:local-search}. We first have
\begin{equation}
\begin{aligned}\label{eq-local-1}
\forall S \subseteq X_{\mathrm{local}}: f(S) \leq \left(n\left(\frac{1+\beta}{1-\beta}\left(1+\frac{\epsilon}{n^2}\right)-1\right)+1\right)\cdot f(X_{\mathrm{local}}).
\end{aligned}
\end{equation}
We are then to prove Eq.~(\refeq{eq-local-1}). Let $S=S_1 \subseteq S_2 \subseteq \cdots \subseteq S_k=X_{\mathrm{local}}$, where $S_i \setminus S_{i-1}=\{e_i\}$. We have
\begin{equation}
\begin{aligned}\label{eq-local-2}
&f(S_i)-f(S_{i-1}) \\
&\geq f(X_{\mathrm{local}})-f(X_{\mathrm{local}}\setminus \{e_i\}) & [\text{by the submodularity of $f$}]\\
&\geq f(X_{\mathrm{local}})-\frac{1}{1-\beta}\tilde{f}(X_{\mathrm{local}}\setminus \{e_i\}) &[\text{by Eq.~(\refeq{proof-eq-algo-learnability-local}) as $X_{\mathrm{local}}\setminus \{e_i\}$ is visited by Algorithm~\ref{alg:local-search}}]\\
& \geq f(X_{\mathrm{local}})-\frac{1+\epsilon/n^2}{1-\beta}\tilde{f}(X_{\mathrm{local}}) &[\text{by lines~5--7 of Algorithm~\ref{alg:local-search} and $X_{\mathrm{local}}$'s local optimality}]\\
& \geq \left(1-\frac{1+\beta}{1-\beta}\left(1+\frac{\epsilon}{n^2}\right)\right)\cdot f(X_{\mathrm{local}}) &[\text{by Eq.~(\refeq{proof-eq-algo-learnability-local}) as $X_{\mathrm{local}}$ is visited by Algorithm~\ref{alg:local-search}}]
\end{aligned}
\end{equation}
By summing up $f(S_i)-f(S_{i-1})$ from $i=2$ to $k$, we can get
\begin{equation}
\begin{aligned}\label{eq-local-3}
f(X_{\mathrm{local}})-f(S)=\sum^{k}_{i=2} \left(f(S_i)-f(S_{i-1})\right) &\geq (k-1)\cdot \left(1-\frac{1+\beta}{1-\beta}\left(1+\frac{\epsilon}{n^2}\right)\right)\cdot f(X_{\mathrm{local}})\\
&\geq n\cdot \left(1-\frac{1+\beta}{1-\beta}\left(1+\frac{\epsilon}{n^2}\right)\right)\cdot f(X_{\mathrm{local}}),
\end{aligned}
\end{equation}
leading to Eq.~(\refeq{eq-local-1}).

Similarly, we can derive
\begin{equation}
\begin{aligned}\label{eq-local-4}
\forall X_{\mathrm{local}} \subseteq S: f(S) \leq \left(n\left(\frac{1+\beta}{1-\beta}\left(1+\frac{\epsilon}{n^2}\right)-1\right)+1\right)f(X_{\mathrm{local}}).
\end{aligned}
\end{equation}
Let $X_{\mathrm{local}}=S_1 \subseteq S_2 \subseteq \cdots \subseteq S_k=S$, where $S_i \setminus S_{i-1}=\{e_i\}$. We have
\begin{equation}
\begin{aligned}\label{eq-local-5}
&f(S_i)-f(S_{i-1}) \\
&\leq f(X_{\mathrm{local}} \cup \{e_i\})-f(X_{\mathrm{local}}) & [\text{by the submodularity of $f$}]\\
&\leq \frac{1}{1-\beta}\tilde{f}(X_{\mathrm{local}}\cup \{e_i\})-f(X_{\mathrm{local}}) &[\text{by Eq.~(\refeq{proof-eq-algo-learnability-local}) as $X_{\mathrm{local}}\cup \{e_i\}$ is visited by Algorithm~\ref{alg:local-search}}]\\
& \leq \frac{1+\epsilon/n^2}{1-\beta}\tilde{f}(X_{\mathrm{local}})-f(X_{\mathrm{local}}) &[\text{by lines~2--4 of Algorithm~\ref{alg:local-search} and $X_{\mathrm{local}}$'s local optimality}]\\
& \leq \left(\frac{1+\beta}{1-\beta}\left(1+\frac{\epsilon}{n^2}\right)-1\right)\cdot f(X_{\mathrm{local}}) &[\text{by Eq.~(\refeq{proof-eq-algo-learnability-local}) as $X_{\mathrm{local}}$ is visited by Algorithm~\ref{alg:local-search}}]
\end{aligned}
\end{equation}
By summing up $f(S_i)-f(S_{i-1})$ from $i=2$ to $k$, we can get
\begin{equation}
\begin{aligned}\label{eq-local-6}
f(S)-f(X_{\mathrm{local}})=\sum^{k}_{i=2} \left(f(S_i)-f(S_{i-1})\right) &\leq (k-1)\cdot \left(\frac{1+\beta}{1-\beta}\left(1+\frac{\epsilon}{n^2}\right)-1\right)\cdot f(X_{\mathrm{local}})\\
&\leq n\cdot \left(\frac{1+\beta}{1-\beta}\left(1+\frac{\epsilon}{n^2}\right)-1\right)\cdot f(X_{\mathrm{local}}),
\end{aligned}
\end{equation}
leading to Eq.~(\refeq{eq-local-4}).

Let $X^*$ denote an optimal subset, i.e., $f(X^*)= \max_{X \subseteq V}f(X)$. By utilizing Eqs.~(\refeq{eq-local-1}) and~(\refeq{eq-local-4}), we have 
\begin{equation}
\begin{aligned}\label{eq-local-7}
 &\left(n\left(\frac{1+\beta}{1-\beta}\left(1+\frac{\epsilon}{n^2}\right)-1\right)+1\right)\cdot f(X_{\mathrm{local}}) \geq f(X_{\mathrm{local}} \cap X^*);\\
  &\left(n\left(\frac{1+\beta}{1-\beta}\left(1+\frac{\epsilon}{n^2}\right)-1\right)+1\right)\cdot f(X_{\mathrm{local}}) \geq f(X_{\mathrm{local}} \cup X^*),\\
\end{aligned}
\end{equation}
which leads to
\begin{equation}
\begin{aligned}\label{eq-local-8}
&2\left(n\left(\frac{1+\beta}{1-\beta}\left(1+\frac{\epsilon}{n^2}\right)-1\right)+1\right)\cdot f(X_{\mathrm{local}}) + f(V\setminus X_{\mathrm{local}})\\
&\geq f(X_{\mathrm{local}} \cap X^*)+ f(X_{\mathrm{local}} \cup X^*)+f(V\setminus X_{\mathrm{local}})\\
&\geq f(X_{\mathrm{local}} \cap X^*)+ f(X^* \setminus X_{\mathrm{local}})+f(V)&[\text{by the submodularity of $f$}]\\
&\geq f(X_{\mathrm{local}} \cap X^*)+ f(X^* \setminus X_{\mathrm{local}})&[\text{by the non-negativity of $f$}]\\
&\geq f(X^*)+f(\emptyset)&[\text{by the submodularity of $f$}]\\
&\geq f(X^*)&[\text{by the non-negativity of $f$}]
\end{aligned}
\end{equation}
Then, we can conclude that
\begin{equation}
\begin{aligned}
\max\{f(X_{\mathrm{local}}), f(V\setminus X_{\mathrm{local}})\}  \geq \left(3+2n\left(\frac{1+\beta}{1-\beta}\left(1+\frac{\epsilon}{n^2}\right)-1\right)\right)^{-1}\cdot f(X^*). 
\end{aligned}
\end{equation}
As the local search algorithm in Algorithm~\ref{alg:local-search} returns the better one between $X_{\mathrm{local}}$ and $V\setminus X_{\mathrm{local}}$ in line~8, the returned subset $\tilde{X}$ satisfies
\begin{equation}
\begin{aligned}
f(\tilde{X}) \geq \frac{1}{1+\beta} \cdot \tilde{f}(\tilde{X})&=\frac{1}{1+\beta} \cdot \max\{\tilde{f}(X_{\mathrm{local}}), \tilde{f}(V\setminus X_{\mathrm{local}})\}\\
&\geq \frac{1-\beta}{1+\beta}\cdot \max\{f(X_{\mathrm{local}}), f(V\setminus X_{\mathrm{local}})\}\\
&\geq \frac{1-\beta}{1+\beta}\cdot \left(3+2n\left(\frac{1+\beta}{1-\beta}\left(1+\frac{\epsilon}{n^2}\right)-1\right)\right)^{-1} \cdot f(X^*), 
\end{aligned}
\end{equation}
i.e., Eq.~(\refeq{eq-offline-opt-multi}) in Definition~\ref{def-offline-optimizable} holds with
$$
\alpha=\frac{1-\beta}{1+\beta}\cdot\left(3+2n\left(\frac{1+\beta}{1-\beta}\left(1+\frac{\epsilon}{n^2}\right)-1\right)\right)^{-1}.
$$

Finally, we analyze the time complexity of running the local search algorithm on the learned function $\tilde{f}$. Assume that the local search algorithm has performed $k$ iterations until finding $X_{\mathrm{local}}$, i.e., the maintained subset has been improved $k$ times by local search. Because the local search algorithm starts from the best single element $\tilde{v}^*\in\arg \max_{v \in V} \tilde{f}(v)$ in line~1 of Algorithm~\ref{alg:local-search}, and each improvement improves the $\tilde{f}$ value of the current subset by a factor larger than $1+\epsilon/n^2$, we have
\begin{equation}
\begin{aligned}\label{eq-local-9}
\tilde{f}(X_{\mathrm{local}}) > \left(1+\frac{\epsilon}{n^2}\right)^k\cdot \tilde{f}(\tilde{v}^*).
\end{aligned}
\end{equation}
Let $v^*\in\arg \max_{v \in V} f(v)$ denote the best single element for the true function $f$. We have
\begin{equation}
\begin{aligned}\label{eq-local-10}
\tilde{f}(\tilde{v}^*) \geq \tilde{f}(v^*) \geq (1-\beta) \cdot f(v^*), 
\end{aligned}
\end{equation}
where the last inequality holds by Eq.~(\refeq{proof-eq-algo-learnability-local}) as $v^*$ is visited by Algorithm~\ref{alg:local-search} during the process of determining the best single element in line~1. Furthermore, we have
\begin{equation}
\begin{aligned}\label{eq-local-11}
\tilde{f}(X_{\mathrm{local}}) \leq (1+\beta)\cdot f(X_{\mathrm{local}}) \leq (1+\beta)\cdot nf(v^*), 
\end{aligned}
\end{equation}
where the first inequality holds by Eq.~(\refeq{proof-eq-algo-learnability-local}) as $X_{\mathrm{local}}$ is visited by Algorithm~\ref{alg:local-search}, and the last inequality holds by the submodularity of $f$ and $v^*\in \arg \max_{v \in V} f(v)$. By combining Eqs.~(\refeq{eq-local-9}),~(\refeq{eq-local-10}) and~(\refeq{eq-local-11}), we have 
\begin{equation}
\begin{aligned}
\left(1+\frac{\epsilon}{n^2}\right)^k\cdot (1-\beta) \cdot f(v^*) \leq (1+\beta)\cdot nf(v^*),
\end{aligned}
\end{equation}
leading to $k = O\left(\frac{n^2}{\epsilon} \log\left(\frac{1+\beta}{1-\beta}n\right) \right)$. As each iteration of Algorithm~\ref{alg:local-search} evaluates at most $n$ subsets (by adding a new element to the current subset or deleting an existing element from the current subset), the running time complexity is $O\left(\frac{n^3}{\epsilon} \log\left(\frac{1+\beta}{1-\beta}n\right) \right)$.
\end{proof}

The proof of Theorem~\ref{theo-submodular-unconstrained} is adapted from the proof for the approximation ratio of local search under the value oracle model, i.e., Theorem~3.4 in~\cite{feige2011maximizing}, by using Eq.~(\ref{proof-eq-algo-learnability-local}) under algorithm-dependent learnability to connect the true submodular function $f$ and the learned function $\tilde{f}$. That is, the algorithm-dependent learnability allows the utilization of the local search behavior on the learned function $\tilde{f}$ (e.g., the third `$\geq$' in Eq.~(\refeq{eq-local-2}) and the third `$\leq$' in Eq.~(\refeq{eq-local-5})) while maintaining the required submodular property approximately (e.g., the first two and the last `$\geq$' in Eq.~(\refeq{eq-local-2}) as well as the first two and the last `$\leq$' in Eq.~(\refeq{eq-local-5})).

\subsubsection{Sufficient for Offline Convex Minimization with Projected Gradient Descent}\label{subsec-convex-gradient}

Finally, we study offline convex minimization. Let $\mathcal{X}\subseteq\mathbb{R}^d$ be a non-empty closed convex domain with finite diameter $D=\sup_{\bm{x},\bm{y}\in\mathcal{X}}\|\bm{x}-\bm{y}\|$. We consider a real-valued objective $f$ that is convex and continuously differentiable on an open neighborhood of $\mathcal{X}$, with $\|\nabla f(\bm{x})\|\leq G$ for all $\bm{x}\in\mathcal{X}$ and some constant $G>0$. Requiring differentiability on a neighborhood makes the ambient gradient well-defined even at boundary points of $\mathcal{X}$. Convex minimization in Definition~\ref{def-Prob-convex} is a basic model for many machine learning problems~\cite{nesterov2013introductory}. Meanwhile, as shown in Theorem~\ref{theo-negative-convex}, even when a class of convex functions is PAC learnable and can be minimized efficiently with exact function access, PAC learnability alone still cannot guarantee a non-trivial additive approximation in the offline optimization setting.

\begin{definition}[Convex Minimization]\label{def-Prob-convex}
Given a non-empty closed convex domain $\mathcal{X}\subseteq\mathbb{R}^d$ with finite diameter $D$ and a real-valued function $f$ that is convex and continuously differentiable on an open neighborhood of $\mathcal{X}$, the goal is to find
\[
    \bm{x}^*\in \arg\min_{\bm{x}\in\mathcal{X}} f(\bm{x}).
\]
\end{definition}

It is well known that projected gradient descent can achieve an additive approximation guarantee for convex minimization under the first-order oracle model where the exact gradient can be evaluated~\cite{nesterov2013introductory}. As presented in Algorithm~\ref{alg:gradient-descent}, starting from an initial solution $\bm{x}_1\in\mathcal{X}$, the algorithm repeatedly moves along the negative gradient direction and projects the updated solution back to $\mathcal{X}$, where $\Pi_{\mathcal{X}}$ denotes the Euclidean projection onto $\mathcal{X}$. The algorithm returns the arithmetic mean of its first $T$ iterates. For a convex objective $f$ with $\|\nabla f(\bm{x})\|\leq G$, setting $\eta=D/(G\sqrt{T})$ guarantees that $f(\bar{\bm{x}}_T)-\min_{\bm{x}\in\mathcal{X}}f(\bm{x})\leq DG/\sqrt{T}$.

\begin{algorithm}[htbp]
\caption{Projected Gradient Descent~\cite{nesterov2013introductory}}
\label{alg:gradient-descent}
\textbf{Input}: initial solution $\bm{x}_1 \in \mathcal{X}$, objective function $f$, step size $\eta$, and number of iterations $T\geq1$\\
\textbf{Output}: a solution in $\mathcal{X}$\\
\textbf{Process}:
\begin{algorithmic}[1]
\STATE Let $t=1$ and $\bm{x}_1 \in \mathcal{X}$;
\STATE \textbf{repeat}
\STATE \quad Let $\bm{x}_{t+1}=\Pi_{\mathcal{X}}[\bm{x}_t - \eta \nabla f(\bm{x}_t)]$, and $t=t+1$
\STATE \textbf{until} $t > T$
\STATE \textbf{return} $\bar{\bm{x}}_T = \frac{1}{T}\sum_{t=1}^{T}\bm{x}_t$
\end{algorithmic}
\end{algorithm}

Theorem~\ref{theo-convex-gradient} shows that the first-order algorithm-dependent learnability in Definition~\ref{def-first-order-algo-learnability} is sufficient for offline convex minimization. Unlike the two  algorithms above which require function value evaluations, projected gradient descent determines its trajectory through gradients. Definition~\ref{def-first-order-algo-learnability} therefore controls precisely the first-order information consumed along that trajectory. The resulting additive error contains the usual optimization term and the trajectory-wise gradient error $\zeta_g$. When $\zeta_g=0$, the bound recovers the standard  guarantee $DG/\sqrt{T}$ under exact first-order access.

\begin{theorem}\label{theo-convex-gradient}
If the problem of convex minimization (having the uniform gradient bound $\|\nabla f(\bm{x})\|\leq G$ for all $f\in\mathcal F$ and $\bm{x}\in\mathcal{X}$) is first-order algorithm-dependent learnable with respect to the projected gradient descent algorithm (with step size $\eta=\frac{D}{(G+\zeta_g)\sqrt{T}}$) on distribution $\mathcal{D}$, then it is $\alpha$-offline-optimizable over $\mathcal{D}$, where $$\alpha=\frac{(G+\zeta_g)D}{\sqrt{T}}+\zeta_gD.$$ That is, Eq.~(\refeq{eq-offline-opt-add}) holds with such an $\alpha$.
\end{theorem}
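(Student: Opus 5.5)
We follow the standard analysis of projected gradient descent, treating the learned gradient $\nabla\tilde f(\bm{x}_t)$ as an inexact first-order oracle for $f$. By Definition~\ref{def-first-order-algo-learnability}, with probability at least $1-\delta$ the learning algorithm returns $\tilde f$ such that $\bm{g}_t:=\nabla\tilde f(\bm{x}_t)$ satisfies $\|\bm{g}_t-\nabla f(\bm{x}_t)\|\leq\zeta_g$ at every iterate $\bm{x}_1,\ldots,\bm{x}_T$. These are exactly the points at which Algorithm~\ref{alg:gradient-descent} queries a gradient when run on $\tilde f$. On this event, the triangle inequality and the uniform bound $\|\nabla f\|\leq G$ on $\mathcal{X}$ give $\|\bm{g}_t\|\leq G+\zeta_g$. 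This explains the choice $\eta=D/((G+\zeta_g)\sqrt{T})$: it is the usual step size with $G$ replaced by the effective gradient bound $G+\zeta_g$. Note that the iterates stay in $\mathcal{X}$ because of the projection, so the bounds on $\mathcal{X}$ apply at every $\bm{x}_t$.

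The main step is the one-step inequality. Let $\bm{x}^*$ be a minimizer of $f$ over $\mathcal{X}$; if it is not attained, we take a near-minimizer and a limit. Since the projection onto a closed convex set is nonexpansive and $\bm{x}^*\in\mathcal{X}$, we have $\|\bm{x}_{t+1}-\bm{x}^*\|^2\leq\|\bm{x}_t-\eta\bm{g}_t-\bm{x}^*\|^2$. Expanding gives
\[
\langle\bm{g}_t,\bm{x}_t-\bm{x}^*\rangle\leq\frac{\|\bm{x}_t-\bm{x}^*\|^2-\|\bm{x}_{t+1}-\bm{x}^*\|^2}{2\eta}+\frac{\eta}{2}(G+\zeta_g)^2 .
\]
By convexity of $f$, $f(\bm{x}_t)-f(\bm{x}^*)\leq\langle\nabla f(\bm{x}_t),\bm{x}_t-\bm{x}^*\rangle$. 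We split this inner product as $\langle\bm{g}_t,\bm{x}_t-\bm{x}^*\rangle+\langle\nabla f(\bm{x}_t)-\bm{g}_t,\bm{x}_t-\bm{x}^*\rangle$. Cauchy--Schwarz and the diameter bound control the second term by $\zeta_g D$. This error term is the only new ingredient compared with the exact-oracle proof, and it is where the learnability condition enters. One must bound it as $\zeta_g\|\bm{x}_t-\bm{x}^*\|\leq\zeta_gD$ per step, without trying to telescope it.

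Next we sum over $t=1,\ldots,T$ and telescope the distance terms, using $\|\bm{x}_1-\bm{x}^*\|\leq D$. This gives
\[
\sum_{t=1}^T\bigl(f(\bm{x}_t)-f(\bm{x}^*)\bigr)\leq\frac{D^2}{2\eta}+\frac{\eta T}{2}(G+\zeta_g)^2+T\zeta_gD .
\]
We divide by $T$ and apply Jensen's inequality (convexity of $f$) to $\bar{\bm{x}}_T$, which lies in $\mathcal{X}$ by convexity of the set. Substituting $\eta=D/((G+\zeta_g)\sqrt{T})$ makes the first two terms each equal to $(G+\zeta_g)D/(2\sqrt{T})$. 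Hence $f(\bar{\bm{x}}_T)-\min_{\mathcal{X}}f\leq(G+\zeta_g)D/\sqrt{T}+\zeta_gD$ on the event of probability at least $1-\delta$, which is Eq.~(\ref{eq-offline-opt-add}) with the stated $\alpha$.

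The main obstacle is a subtle point rather than a calculation. The condition must hold at the iterates generated on $\tilde f$, not on $f$, and the whole argument must run along that trajectory. Since the trajectory depends on the samples, we rely on Definition~\ref{def-first-order-algo-learnability} quantifying over exactly those queried points inside the probability, so no union bound over trajectories is needed.
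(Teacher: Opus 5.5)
Your proposal is correct and follows essentially the same argument as the paper: nonexpansiveness of the projection, splitting the convexity inner product into the learned-gradient term plus a $\zeta_g D$ error via Cauchy--Schwarz, bounding $\|\nabla\tilde f(\bm{x}_t)\|\leq G+\zeta_g$, then telescoping and applying Jensen with the stated step size. Your hedge about an unattained minimizer is unnecessary: $\mathcal{X}$ is closed and bounded in $\mathbb{R}^d$ and $f$ is continuous, so the minimum is attained.
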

\begin{proof}
By the first-order algorithm-dependent learnability with respect to the projected gradient descent algorithm (with step size $\eta=\frac{D}{(G+\zeta_g)\sqrt{T}}$), we know from Definition~\ref{def-first-order-algo-learnability} that there is a learning algorithm which can return a function $\tilde{f}$ with probability $1-\delta$: 
\begin{equation}
\label{eq:gd-gradient-condition}
    \|\nabla\tilde{f}(\hat{\bm{x}}_t)-\nabla f(\hat{\bm{x}}_t)\|\leq \zeta_g,\qquad t=1,\ldots,T,
\end{equation}
when getting access to a polynomial number of samples $\{\bm{x}_i,f(\bm{x}_i)\}^m_{i=1}$ drawn i.i.d. from the distribution $\mathcal{D}$. Note that $\hat{\bm{x}}_1,\ldots,\hat{\bm{x}}_{T}$ denote the trajectory generated by running the projected gradient descent algorithm (with $\eta=\frac{D}{(G+\zeta_g)\sqrt{T}}$) on $\tilde{f}$. 

Let $\bm{x}^*$ denote an optimal solution, i.e., $f(\bm{x}^*)=\min_{\bm{x}\in\mathcal{X}}f(\bm{x})$. As $\hat{\bm{x}}_t$ denotes the solution after $t-1$ update steps on $\tilde{f}$, we have
\[
    \hat{\bm{x}}_{t+1}=\Pi_{\mathcal{X}}\left[\hat{\bm{x}}_t-\eta\nabla\tilde{f}(\hat{\bm{x}}_t)\right],
    \qquad
    \eta=\frac{D}{(G+\zeta_g)\sqrt{T}} .
\]
Because $\bm{x}^*\in\mathcal{X}$, $\Pi_{\mathcal{X}}(\bm{x}^*)=\bm{x}^*$. The non-expansiveness of Euclidean projection onto a closed convex set therefore gives
\begin{equation}\label{eq-gd-step1}
\begin{aligned}
\|\hat{\bm{x}}_{t+1}-\bm{x}^*\|^2 
&= \left\|\Pi_{\mathcal{X}}\left[\hat{\bm{x}}_t-\eta\nabla\tilde{f}(\hat{\bm{x}}_t)\right]-\Pi_{\mathcal{X}}(\bm{x}^*)\right\|^2\\
&\leq \left\|\hat{\bm{x}}_t-\eta\nabla\tilde{f}(\hat{\bm{x}}_t)-\bm{x}^*\right\|^2\\
&= \|\hat{\bm{x}}_t-\bm{x}^*\|^2-2\eta\langle \nabla\tilde{f}(\hat{\bm{x}}_t),\hat{\bm{x}}_t-\bm{x}^*\rangle+\eta^2\|\nabla\tilde{f}(\hat{\bm{x}}_t)\|^2.
\end{aligned}
\end{equation}
We then bound the inner product term as
\begin{equation}\label{eq-gd-step2}
\begin{aligned}
&\langle \nabla\tilde{f}(\hat{\bm{x}}_t),\hat{\bm{x}}_t-\bm{x}^*\rangle\\
&= \langle \nabla f(\hat{\bm{x}}_t),\hat{\bm{x}}_t-\bm{x}^*\rangle+\langle \nabla\tilde{f}(\hat{\bm{x}}_t)-\nabla f(\hat{\bm{x}}_t),\hat{\bm{x}}_t-\bm{x}^*\rangle\\
&\geq f(\hat{\bm{x}}_t)-f(\bm{x}^*)-\|\nabla\tilde{f}(\hat{\bm{x}}_t)-\nabla f(\hat{\bm{x}}_t)\|\cdot\|\hat{\bm{x}}_t-\bm{x}^*\| & [\text{by the convexity of $f$ and \textit{Cauchy--Schwarz}}]\\
&\geq f(\hat{\bm{x}}_t)-f(\bm{x}^*)-\zeta_gD, & [\text{by Eq.~(\refeq{eq:gd-gradient-condition}) and the definition of $D$}]
\end{aligned}
\end{equation}
and the gradient norm as
\begin{equation}\label{eq-gd-step3}
\begin{aligned}
\|\nabla\tilde{f}(\hat{\bm{x}}_t)\|
&\leq \|\nabla f(\hat{\bm{x}}_t)\|+\|\nabla\tilde{f}(\hat{\bm{x}}_t)-\nabla f(\hat{\bm{x}}_t)\| & [\text{by the triangle inequality}]\\
&\leq G+\zeta_g. & [\text{by the gradient bound of $f$ and Eq.~(\refeq{eq:gd-gradient-condition})}]
\end{aligned}
\end{equation}
Substituting Eqs.~(\refeq{eq-gd-step2}) and~(\refeq{eq-gd-step3}) into Eq.~(\refeq{eq-gd-step1}) and rearranging, we get
\begin{equation}\label{eq-gd-step4}
2\eta(f(\hat{\bm{x}}_t)-f(\bm{x}^*))\leq \|\hat{\bm{x}}_t-\bm{x}^*\|^2-\|\hat{\bm{x}}_{t+1}-\bm{x}^*\|^2+2\eta\zeta_gD+\eta^2(G+\zeta_g)^2.
\end{equation}
Summing Eq.~(\refeq{eq-gd-step4}) from $t=1$ to $T$ gives
\begin{equation}\label{eq-gd-step5}
\begin{aligned}
2\eta\sum^{T}_{t=1}(f(\hat{\bm{x}}_t)-f(\bm{x}^*))
&\leq \|\hat{\bm{x}}_1-\bm{x}^*\|^2-\|\hat{\bm{x}}_{T+1}-\bm{x}^*\|^2+2T\eta\zeta_gD+T\eta^2(G+\zeta_g)^2\\
&\leq D^2+2T\eta\zeta_gD+T\eta^2(G+\zeta_g)^2,
\end{aligned}
\end{equation}
where the last inequality holds because $\hat{\bm{x}}_1,\bm{x}^*\in\mathcal{X}$ and $\|\hat{\bm{x}}_{T+1}-\bm{x}^*\|^2\geq 0$. Let $\bar{\bm{x}}_T=\sum_{t=1}^T\hat{\bm{x}}_t/T$ be the solution returned by Algorithm~\ref{alg:gradient-descent}. Since $\mathcal{X}$ is convex, $\bar{\bm{x}}_T\in\mathcal{X}$. By the convexity of $f$, Jensen's inequality, and Eq.~(\refeq{eq-gd-step5}), we have
\begin{equation}\label{eq-gd-step6}
\begin{aligned}
f(\bar{\bm{x}}_T)-f(\bm{x}^*)
&\leq \frac{1}{T}\sum^{T}_{t=1}\left(f(\hat{\bm{x}}_t)-f(\bm{x}^*)\right)\leq \frac{D^2}{2\eta T}+\zeta_gD+\frac{\eta(G+\zeta_g)^2}{2}.
\end{aligned}
\end{equation}
With $\eta=D/((G+\zeta_g)\sqrt{T})$, Eq.~(\refeq{eq-gd-step6}) implies
\begin{equation}
f(\bar{\bm{x}}_T)-f(\bm{x}^*) \leq \frac{(G+\zeta_g)D}{\sqrt{T}}+\zeta_gD.
\end{equation}
Thus, Eq.~(\refeq{eq-offline-opt-add}) holds with $\alpha=\frac{(G+\zeta_g)D}{\sqrt{T}}+\zeta_gD$ .
\end{proof}

The proof of Theorem~\ref{theo-convex-gradient} follows the standard analysis of projected gradient descent, using Eq.~(\refeq{eq:gd-gradient-condition}) to connect the learned and true gradients only along the generated trajectory. We can find from the one-step upper-bound analysis in Eqs.~(\refeq{eq-gd-step1})--(\refeq{eq-gd-step4}) that the first-order algorithm-dependent learnability allows the utilization of the algorithm behavior on the learned function $\tilde{f}$ (the first `$=$' in Eq.~(\refeq{eq-gd-step1})), while retaining the first-order convexity relation of the true function $f$ up to an additive error $\zeta_gD$ (the two `$\geq$' in Eq.~(\refeq{eq-gd-step2})) and the gradient bound of $f$ up to an additive error $\zeta_g$ (the second `$\leq$' in Eq.~(\refeq{eq-gd-step3})). Finally, the convexity of $f$ converts the average trajectory performance into a guarantee for the returned mean iterate. Together with the two discrete cases in Theorems~\ref{theo-submodular-constrained} and~\ref{theo-submodular-unconstrained}, this result supports an information-matched view of algorithm-dependent learnability: function values along the generated optimization trajectory must be reliable for value-query optimizers, whereas gradients must be reliable for first-order optimizers.

\section{Offline Optimization Inspired by Algorithm-dependent Learnability}
\label{sec:method}

The sufficiency of algorithm-dependent learnability for the three offline optimization scenarios proved in Section~\ref{sec-learnability} suggests a practical design principle: an offline optimizer should preserve the information needed along the search process it will execute, rather than approximate the objective uniformly over the entire search space. First, the required precision is query-local: it is needed only on the points evaluated by the optimizer rather than uniformly over the entire search space. Besides, the method should be information-matched: the learned object should preserve the signal that determines the optimizer’s next decision, such as function values for value-query methods or gradients for first-order methods. 

In practice, offline optimization is given a fixed offline dataset, while the optimizer-induced query trajectory is unavailable. However, we can construct synthetic trajectories from the fixed offline dataset as an operational proxy for modeling, which motivates the trajectory-learning framework for offline optimization, comprising three stages: trajectory construction, trajectory modeling, and candidate generation. We use the framework to analyze existing trajectory-learning-based methods and find that their trajectory-construction rules do not follow optimizers' decision signal well. Thus, we introduce the concepts of gradient consistency and uncertainty calibration to integrate the optimizers' decision signal, and propose a new trajectory-learning method by instantiating the framework with Uncertainty-aware Gradient-guided Trajectory Learning (UGTL), which can better learn the local search behavior and thus generate better candidate solutions. 

\subsection{Trajectory-Learning Framework}

The analyses in Sections~\ref{subsec-submodular-greedy}--\ref{subsec-convex-gradient} share one structural feature: the approximation guarantee for offline data-driven optimization is tied to the optimizer's trajectory. Motivated by this observation, we introduce a trajectory-learning framework that represents an optimizer's search behavior through candidate improvement paths constructed from offline data. The framework comprises three stages: \emph{trajectory construction}, \emph{trajectory modeling}, and \emph{candidate generation}. Specifically, given an offline dataset $D$, the trajectory-construction stage produces a trajectory dataset $D_{\mathrm{traj}}=\{\tau_j\}_{j=1}^{N_{\mathrm{traj}}}$, where each length-$H$ trajectory takes the form
$\tau_j=\big((\bm{x}_{j,1},y_{j,1}),\ldots,(\bm{x}_{j,H},y_{j,H})\big)$.
The trajectory-modeling stage learns the transition structure encoded by these paths, while the candidate-generation stage uses the learned behavior to produce candidate designs for evaluation. Algorithm~\ref{alg:framework} formalizes this three-stage framework.

\begin{algorithm}[ht]
\caption{Trajectory-Learning Framework for Offline Optimization}
\label{alg:framework}
\textbf{Input}: Offline dataset $D$, candidate budget $Q$\\
\textbf{Output}: Candidate set $C$ with $|C|\leq Q$\\
\textbf{Process}:
\begin{algorithmic}[1]
\STATE \textbf{Trajectory construction.} Construct $D_{\mathrm{traj}}$ from $D$, with each trajectory representing a plausible improvement path.
\STATE \textbf{Trajectory modeling.} Train a model $M_{\theta}$ on $D_{\mathrm{traj}}$ to capture the search behavior encoded by these paths.
\STATE \textbf{Candidate generation.} Use $M_{\theta}$, optionally with a proxy model, to generate and select a set $C$ of candidate solutions.
\STATE \textbf{return} $C$
\end{algorithmic}
\end{algorithm}

Together, the three stages specify how trajectory information is extracted from the offline dataset, represented by a learned model, and converted into candidate designs. Trajectory construction determines which empirical search behavior is visible to the learner; trajectory modeling captures that behavior; and candidate generation executes it, typically under a high-value condition. Since later stages can only exploit the supervision supplied by the constructed paths, the stage of trajectory construction is crucial and incoherent trajectories may cause even a capable model to learn unsuitable search dynamics.

We want to emphasize that the connection to the sufficient condition, algorithm-dependent learnability, for offline optimization proved in Section~\ref{sec-learnability} is motivational rather than a formal implication. Algorithm-dependent learnability is stated using unknown objective information along optimizer-induced queries, whereas a practical method must construct its training trajectories from the offline dataset $D$ alone. The framework therefore operationalizes the principle of concentrating learning on plausible search processes, without verifying the theoretical conditions rigorously.

\subsection{Existing Methods under the Framework}\label{subsec:existing-methods}

In this section, we show that some existing representative offline optimization methods can be contained under our trajectory-learning framework and analyzed through the same three stages. Table~\ref{tab:comparison} maps each method's construction rule, learned trajectory representation, and candidate-generation procedure to Algorithm~\ref{alg:framework}. 

\begin{table}[htbp]
\centering
\caption{Existing trajectory-related offline optimization methods viewed through the trajectory-learning lens.}
\label{tab:comparison}
\small
\setlength{\tabcolsep}{4pt}
\begin{tabularx}{\textwidth}{@{}l>{\raggedright\arraybackslash}X>{\raggedright\arraybackslash}X>{\raggedright\arraybackslash}X@{}}
\toprule
\textbf{Method} & \textbf{Trajectory construction} & \textbf{Trajectory modeling} & \textbf{Candidate generation} \\
\midrule
\multicolumn{4}{l}{\emph{Generative trajectory learning methods}} \\
BONET~\cite{mashkaria2023generative} & Monotone sort-sampled paths with regret budgets & Autoregressive Transformer & Rollout under low-regret conditioning \\
PGS~\cite{chemingui2024offline} & Randomized paths from a top-percentile subset & Offline RL policy with a surrogate & Policy-guided gradient search \\
GTG~\cite{yun2024guided} & Random local score-feasible paths & Conditional diffusion over trajectories & Guided diffusion sampling with proxy filtering \\
\midrule
\multicolumn{4}{l}{\emph{Trajectory-informed optimization methods}} \\
MATCH-OPT~\cite{hoang2024learning} & Monotone binned transitions & Surrogate with trajectory gradient matching & Gradient ascent on the surrogate \\
ROOT~\cite{dao2025root} & Low/high endpoint pairs from GP-posterior mean functions & Probabilistic bridges between paired endpoints & Reverse bridges from top offline designs \\
\bottomrule
\end{tabularx}
\end{table}

This formulation provides a taxonomy for these existing methods that some methods directly model complete trajectories, whereas others use trajectory-derived signals as auxiliary supervision or transport paths.
Specifically, BONET~\cite{mashkaria2023generative} constructs monotone paths and models them autoregressively; PGS~\cite{chemingui2024offline} converts randomly sampled transitions from top-$p$ percentile offline data into policy supervision whose actions are defined relative to a surrogate gradient; and GTG~\cite{yun2024guided} fits a diffusion model to locally score-feasible paths. These methods instantiate all three stages explicitly. MATCH-OPT~\cite{hoang2024learning} uses monotone paths indirectly as gradient-matching supervision, whereas ROOT~\cite{dao2025root} directly learns stochastic transport bridges between low- and high-value endpoint distributions.

The comparison clearly clarifies what the trajectory-construction stage must provide. Increasing scores alone, as constructed by BONET~\cite{mashkaria2023generative} and PGS~\cite{chemingui2024offline}, do not make a sequence an informative improvement path, since sorting or randomly pairing high-value designs might create long transitions that do not describe how an optimizer moves locally~\citep{yun2024guided}. Restricting transitions to neighborhoods like GTG~\cite{yun2024guided} can improve local coherence, but locality alone does not ensure that successive steps follow a consistent improvement direction. It is expected that useful trajectories should encode both local coherence and directional progress.

In Figure~\ref{fig:branin_experiment}, we illustrate this distinction on the negated Branin function over $[-5,10]\times[0,15]$. We construct the offline dataset by uniformly sampling 50{,}000 points and retaining only the bottom $60\%$ according to their objective values. As shown in Figure~\ref{fig:branin_experiment}(a), BONET's sort-sampled paths and PGS's randomly assembled paths contain long, spatially incoherent transitions. Figure~\ref{fig:branin_experiment}(b) shows that GTG substantially improves locality by restricting transitions to score-feasible neighborhoods. However, because its next points are sampled without explicit directional guidance, the resulting trajectories (shown in pink color) do not consistently form smooth improvement paths toward high-value regions. Figure~\ref{fig:branin_experiment}(c) further shows that the geometry of the constructed paths (generated by the same conditional diffusion model and full-trajectory sampler) persists after trajectory modeling, where the GTG trajectories show limited improvement.

\begin{figure}[ht]
    \centering
    \begin{tikzpicture}[baseline=(current bounding box.center)]
        \draw[braninbonet, line width=1.0pt] (0,0)--(0.42,0);
        \filldraw[fill=braninbonet, draw=braninbonet] (0.21,0) circle (1.2pt);
        \node[anchor=west] at (0.50,0) {\scriptsize BONET};
        \draw[braninpgs, line width=1.0pt] (1.90,0)--(2.32,0);
        \filldraw[fill=braninpgs, draw=braninpgs] (2.11,0) circle (1.2pt);
        \node[anchor=west] at (2.45,0) {\scriptsize PGS};
        \draw[braningtg, line width=1.0pt] (3.60,0)--(4.02,0);
        \filldraw[fill=braningtg, draw=braningtg] (3.81,0.08)--(3.72,-0.07)--(3.90,-0.07)--cycle;
        \node[anchor=west] at (4.15,0) {\scriptsize GTG};
        \draw[braninugtl, line width=1.0pt] (5.35,0)--(5.77,0);
        \filldraw[fill=braninugtl, draw=braninugtl] (5.56,0) circle (1.2pt);
        \node[anchor=west] at (5.90,0) {\scriptsize UGTL};
        \fill[black] (7.30,0) circle (1.35pt);
        \node[anchor=west] at (7.47,0) {\scriptsize Start};
        \node[text=red] at (8.90,0) {\scriptsize$\bigstar$};
        \node[anchor=west] at (9.08,0) {\scriptsize Optima};
    \end{tikzpicture}

    \vspace{0.25em}
    \begin{subfigure}[t]{0.318\textwidth}
        \centering
        \includegraphics[width=\linewidth,trim=0 0 145bp 0,clip]{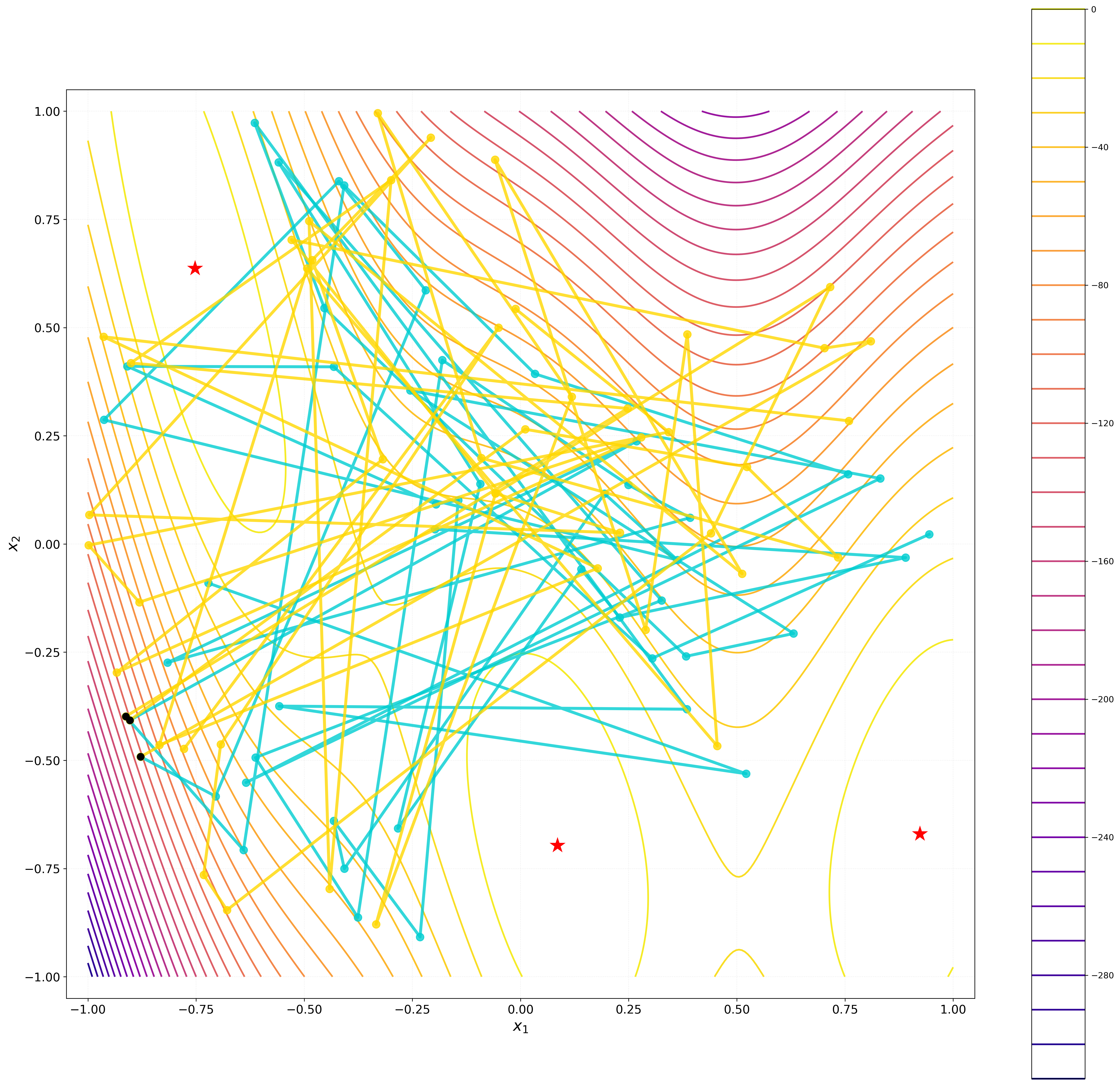}
        \caption{BONET and PGS constructors}
        \label{fig:branin_baselines}
    \end{subfigure}
    \hfill
    \begin{subfigure}[t]{0.318\textwidth}
        \centering
        \includegraphics[width=\linewidth,trim=0 0 145bp 0,clip]{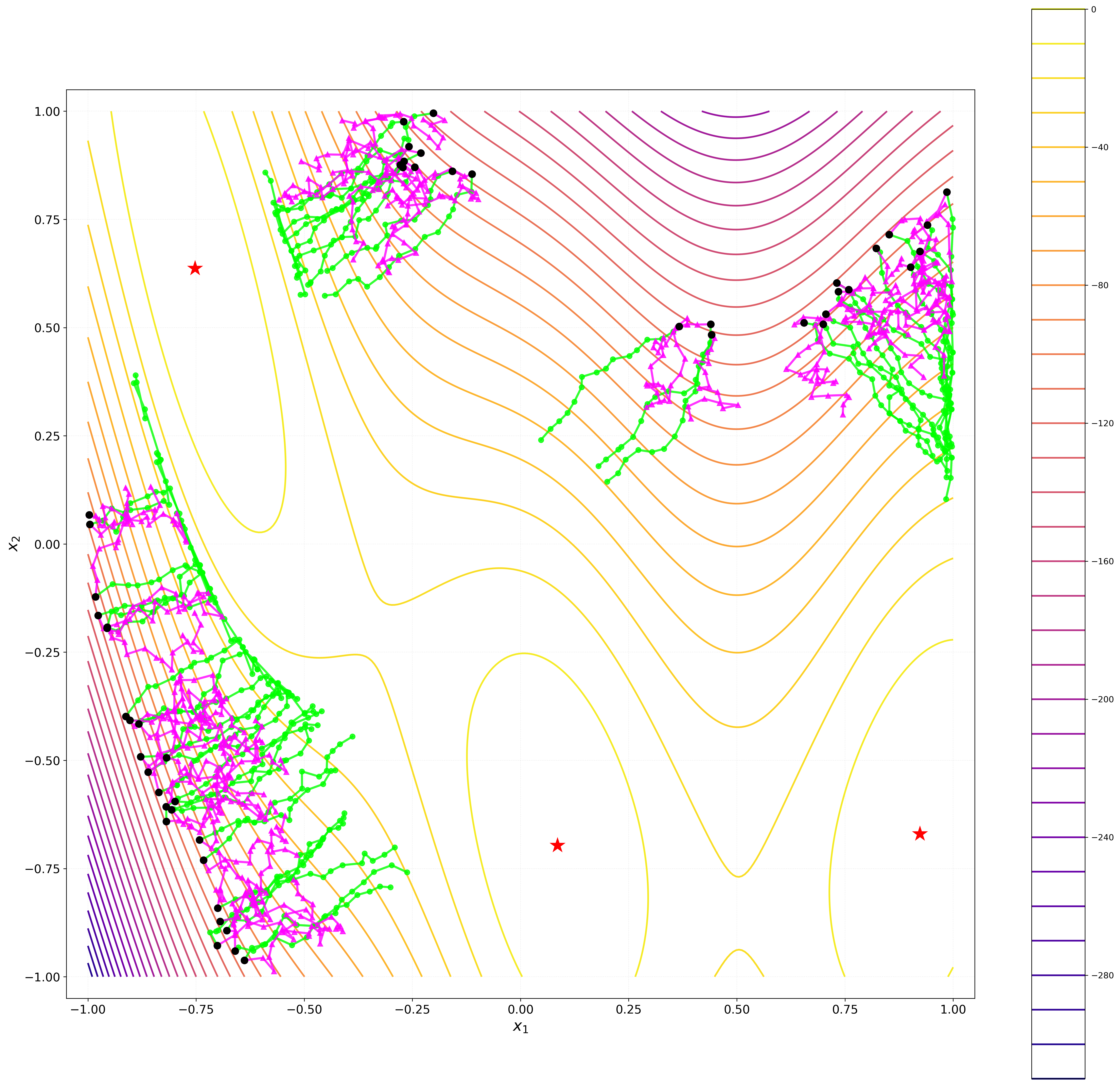}
        \caption{GTG and UGTL constructors}
        \label{fig:branin_construction}
    \end{subfigure}
    \hfill
    \begin{subfigure}[t]{0.318\textwidth}
        \centering
        \includegraphics[width=\linewidth,trim=0 0 145bp 0,clip]{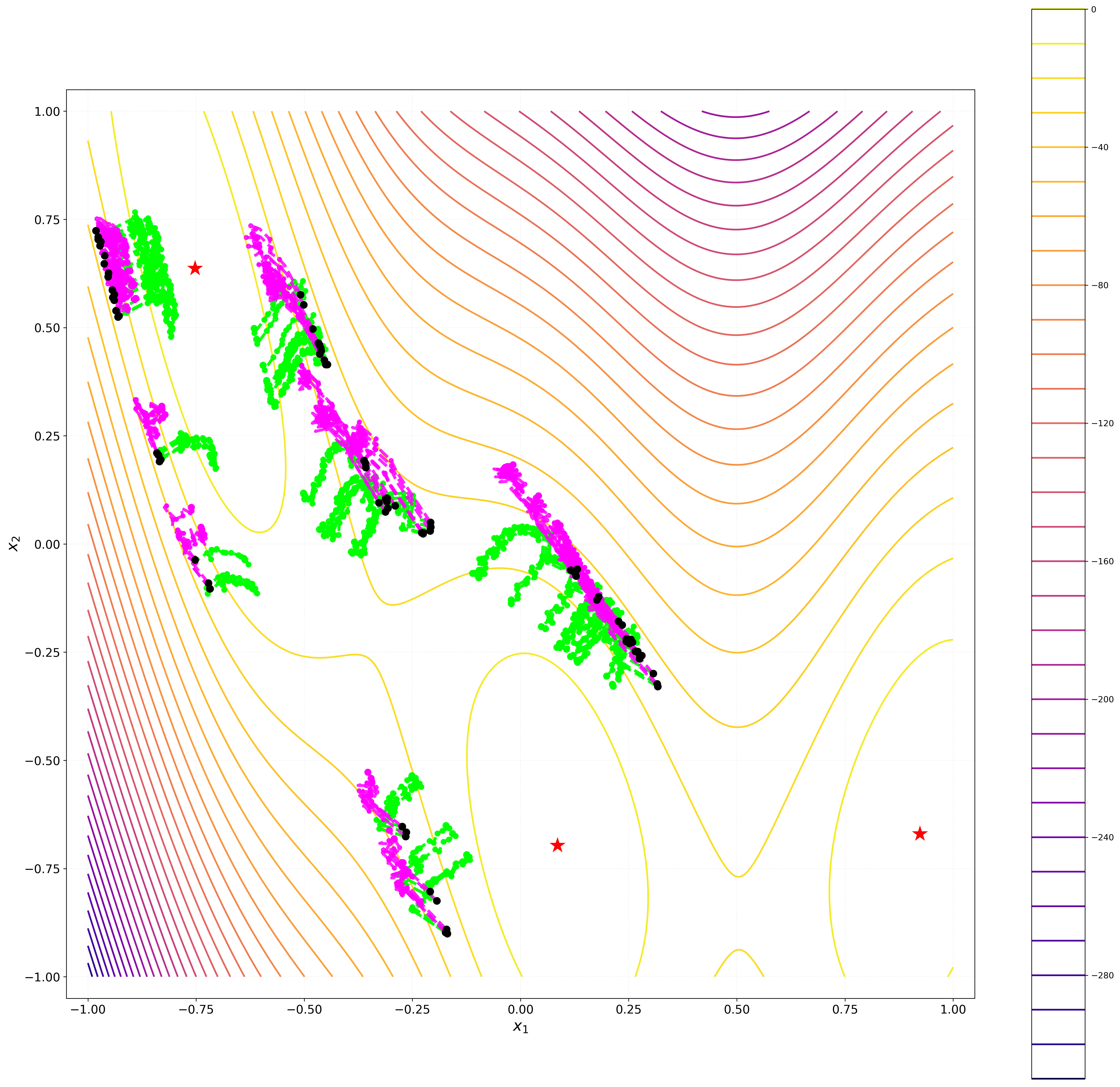}
        \caption{Generated trajectories}
        \label{fig:branin_generated}
    \end{subfigure}
    \caption{A motivating example on the negated Branin function. (a) and (b) compare constructed trajectories, and (c) shows trajectories generated by GTG and UGTL with the same conditional diffusion model. Black dots and red stars denote starting points and global optima, respectively.}
    \label{fig:branin_experiment}
\end{figure}

These observations motivate a trajectory constructor that preserves locality while guiding successive transitions toward consistent improvement. We therefore introduce approximate proxy gradients to provide directional guidance and improve trajectory quality, and formulate this design as Uncertainty-aware Gradient-guided Trajectory Learning (UGTL). As illustrated by the green trajectories in Figure~\ref{fig:branin_experiment}(b) and (c), UGTL produces smoother ascent-oriented paths whose structure is retained by the downstream diffusion model. In the next subsection, we will introduce UGTL in detail.

\subsection{A New Offline Optimization Method: Uncertainty-aware Gradient-guided Trajectory Learning}\label{subsec:our-algorithm}

To provide directional guidance of the trajectory construction procedure, we introduce UGTL, which implements the three stages of Algorithm~\ref{alg:framework}. In trajectory construction, it constructs locally coherent paths using proxy-gradient guidance adjusted by the prediction uncertainty of an ensemble of surrogate models. In trajectory modeling, it learns the retained paths using a terminal-score-conditioned diffusion model. In candidate generation, it produces the evaluation batch through guided trajectory sampling and a Cluster-then-Select procedure.

\paragraph{Trajectory construction} Each trajectory
$\tau=((\bm{x}^{\tau}_1,y^{\tau}_1),\ldots,(\bm{x}^{\tau}_H,y^{\tau}_H))$
starts from a point sampled uniformly from the bottom $p_{\mathrm{init}}\%$ of observed scores, leaving scope for improvement. Given the current prefix $\tau_{1:k}$, we define its best observed score as
\begin{equation}
    b_k
    =
    \max_{1\leq h\leq k}
    y^{\tau}_h.
\end{equation}
We then construct a score-feasible local neighborhood
\begin{equation}
    S_k
    =
    \operatorname{NN}_K\!\left(
    \bm{x}^{\tau}_k;
    \left\{
    (\bm{x}_j,y_j)\in D:
    \bm{x}_j
    \notin
    \{\bm{x}^{\tau}_1,\ldots,\bm{x}^{\tau}_k\},
    \;
    y_j\geq b_k-\xi
    \right\}
    \right),
\label{eq:neighbor-set}
\end{equation}
where $\operatorname{NN}_K(\bm{x};\mathcal{S})$ returns up to $K$ nearest designs to $\bm{x}$ in the set $\mathcal{S}$. The score constraint $y_j\geq b_k-\xi$ keeps candidate scores close to the best value reached by the current trajectory, while $\xi\geq0$ allows short non-monotone transitions. If no unvisited design satisfies the score constraint, we use the $K$ highest-scoring unvisited designs as a fallback, breaking score ties by distance to $\bm{x}^{\tau}_k$.

Within $S_k$, we favor transitions aligned with the gradient. For this purpose, we train an ensemble of $M_{\mathrm{ens}}$ surrogate models $\{\tilde f_{\phi_m}\}_{m=1}^{M_{\mathrm{ens}}}$ on $D$. The ensemble mean $\mu_{\phi}$ serves as the proxy score, while the ensemble dispersion $\sigma_{\phi}$ provides an estimate of predictive uncertainty:
\begin{equation}
    \mu_{\phi}(\bm{x})
    =
    \frac{1}{M_{\mathrm{ens}}}
    \sum_{m=1}^{M_{\mathrm{ens}}}
    \tilde f_{\phi_m}(\bm{x}),
    \qquad
    \sigma_{\phi}(\bm{x})
    =
    \sqrt{
    \frac{1}{M_{\mathrm{ens}}}
    \sum_{m=1}^{M_{\mathrm{ens}}}
    \big(
        \tilde f_{\phi_m}(\bm{x})
        -
        \mu_{\phi}(\bm{x})
    \big)^2
    }.
\label{eq:muandsigma}
\end{equation}
We use the gradient of the ensemble mean, $\nabla_{\bm{x}}\mu_{\phi}(\bm{x})$, as a proxy gradient to provide directional guidance during trajectory construction.

For each $(\bm{x}_j,y_j)\in S_k$, we define the stabilized cosine alignment
\begin{equation}
    a_{k,j}
    =
    \frac{
    \left\langle
    \nabla_{\bm{x}}\mu_{\phi}(\bm{x}^{\tau}_k),
    \bm{x}_j-\bm{x}^{\tau}_k
    \right\rangle
    }{
    \|
    \nabla_{\bm{x}}\mu_{\phi}(\bm{x}^{\tau}_k)
    \|
    \|
    \bm{x}_j-\bm{x}^{\tau}_k
    \|
    +
    c_{\mathrm{cos}}
    },
\end{equation}
where $c_{\mathrm{cos}}=10^{-8}$ provides numerical stability. The next point is sampled according to
\begin{equation}
    \Pr\!\left(
    (\bm{x}^{\tau}_{k+1},y^{\tau}_{k+1})
    =
    (\bm{x}_j,y_j)
    \mid
    \tau_{1:k}
    \right)
    =
    \frac{
    \exp(a_{k,j}/\kappa_k)
    }{
    \sum_{(\bm{x}_{j'},y_{j'})\in S_k}
    \exp(a_{k,j'}/\kappa_k)
    }.
\label{eq:softmax}
\end{equation}
The temperature $\kappa_k$ is determined by the ensemble uncertainty:
\begin{equation}
    \kappa_k
    =
    1+
    \frac{
    \sigma_{\phi}(\bm{x}^{\tau}_k)
    }{
    \bar{\sigma}_{\phi}
    +
    c_{\mathrm{temp}}
    },
    \qquad
    \bar{\sigma}_{\phi}
    =
    \frac{1}{N}
    \sum_{i=1}^{N}
    \sigma_{\phi}(\bm{x}_i),
\end{equation}
where $c_{\mathrm{temp}}>0$. Low predictive uncertainty yields a sharper preference for gradient-aligned transitions, whereas high uncertainty increases the temperature and spreads probability across the feasible neighborhood.

After constructing $N_{\mathrm{traj}}$ trajectories, we rank them by their endpoint improvement,
\begin{equation}
    \Delta_y(\tau)
    =
    y^{\tau}_H-y^{\tau}_1,
\end{equation}
and retain the top $\lceil\rho N_{\mathrm{traj}}\rceil$ trajectories, where $\rho\in(0,1]$. This filtering removes stagnant paths without requiring every transition to be monotone. The retained trajectories form $D_{\mathrm{traj}}$, and Algorithm~\ref{algo:traj_construction} summarizes the complete construction stage.

\begin{algorithm}[t]
\caption{UGTL: Uncertainty-aware Gradient-guided Trajectory Construction}
\label{algo:traj_construction}
\textbf{Input}: Offline dataset $D$, horizon $H$, trajectory count $N_{\mathrm{traj}}$, initial percentile $p_{\mathrm{init}}$, neighborhood size $K$, score tolerance $\xi$, retention ratio $\rho$, ensemble size $M_{\mathrm{ens}}$\\
\textbf{Output}: Trajectory dataset $D_{\mathrm{traj}}$
\begin{algorithmic}[1]
\STATE Train the surrogate ensemble $\{\tilde f_{\phi_m}\}_{m=1}^{M_{\mathrm{ens}}}$ on $D$ and compute $\mu_{\phi}(\cdot)$ and $\sigma_{\phi}(\cdot)$ using Eq.~(\refeq{eq:muandsigma});
\STATE Initialize $\mathcal{T}_{\mathrm{traj}}\leftarrow\emptyset$;
\FOR{$u=1,\ldots,N_{\mathrm{traj}}$}
    \STATE Sample $(\bm{x}^{\tau}_1,y^{\tau}_1)$ from the bottom $p_{\mathrm{init}}\%$ of $D$ and initialize $\tau\leftarrow((\bm{x}^{\tau}_1,y^{\tau}_1))$;
    \FOR{$k=1,\ldots,H-1$}
        \STATE Construct $S_k$ using Eq.~(\refeq{eq:neighbor-set});
        \IF{$S_k=\emptyset$}
            \STATE Use the $K$ highest-scoring unvisited designs
        \ENDIF
        \STATE Sample $(\bm{x}^{\tau}_{k+1},y^{\tau}_{k+1})$ from $S_k$ using Eq.~(\refeq{eq:softmax});
        \STATE Append $(\bm{x}^{\tau}_{k+1},y^{\tau}_{k+1})$ to $\tau$
    \ENDFOR
    \STATE Add $\tau$ to $\mathcal{T}_{\mathrm{traj}}$
\ENDFOR
\STATE Rank the trajectories in $\mathcal{T}_{\mathrm{traj}}$ by $\Delta_y(\tau)=y^{\tau}_H-y^{\tau}_1$;
\STATE Retain the top $\lceil\rho N_{\mathrm{traj}}\rceil$ trajectories as $D_{\mathrm{traj}}$
\RETURN $D_{\mathrm{traj}}$
\end{algorithmic}
\end{algorithm}

\paragraph{Trajectory modeling}

For each $\tau\in D_{\mathrm{traj}}$, let
$\bm{X}_{\tau}=[\bm{x}^{\tau}_1,\ldots,\bm{x}^{\tau}_H]$
denote its design sequence and let
$y_{\tau}^{\mathrm{end}}=y^{\tau}_H$
denote its terminal score. We model the design sequence conditioned on its terminal score using a conditional diffusion model
$p_{\theta}(\bm{X}_{\tau}\mid y_{\tau}^{\mathrm{end}})$.
The forward process is
\begin{equation}
    q(
    \bm{X}^{(t)}_{\tau}
    \mid
    \bm{X}^{(0)}_{\tau}
    )
    =
    \mathcal{N}\!\left(
    \sqrt{\bar{\alpha}_t}\,
    \bm{X}^{(0)}_{\tau},
    (1-\bar{\alpha}_t)I
    \right),
\end{equation}
where $t=1,\ldots,T_{\mathrm{diff}}$, $\alpha_t\in(0,1)$ is the signal-retention coefficient at diffusion step $t$, and
$\bar{\alpha}_t=\prod_{s=1}^{t}\alpha_s$.

We train a noise-prediction network
$\varepsilon_{\theta}(\bm{X}^{(t)}_{\tau},t,y)$
using classifier-free conditioning:
\begin{equation}
    \mathcal{L}(\theta)
    =
    \mathbb{E}_{\tau,t,\varepsilon}
    \left[
    \left\|
    \varepsilon
    -
    \varepsilon_{\theta}
    (
    \bm{X}^{(t)}_{\tau},
    t,
 y
    )
    \right\|^2
    \right],
\end{equation}
where
$\tau$ is sampled from $D_{\mathrm{traj}}$,
$t$ is sampled uniformly from
$\{1,\ldots,T_{\mathrm{diff}}\}$,
and $\varepsilon\sim\mathcal{N}(0,I)$.
The condition $y$ equals $y_\tau^{\mathrm{end}}$
with probability $1-p_{\mathrm{drop}}$; otherwise, it
is dropped and represented by a dedicated null-conditioning
embedding, which enables classifier-free guidance during candidate generation.

\paragraph{Candidate generation}

We use the learned diffusion model to generate trajectories toward a target terminal score. Let $y_{\mathrm{tar}}
    =
    \lambda_{\mathrm{tar}} \cdot 
    \max_{(\bm{x},y)\in D}y$, where $\lambda_{\mathrm{tar}}\geq1$ controls the target level. Classifier-free guidance with scale $s_{\mathrm{cfg}}$ computes
\begin{equation}
    \widehat{\varepsilon}_{\theta}
    (
    \bm{X}^{(t)},
    t,
    y_{\mathrm{tar}}
    )
    =
    \varepsilon_{\theta}
    (
    \bm{X}^{(t)},
    t,
    \varnothing
    )
    +
    s_{\mathrm{cfg}}
    \left(
    \varepsilon_{\theta}
    (
    \bm{X}^{(t)},
    t,
    y_{\mathrm{tar}}
    )
    -
    \varepsilon_{\theta}
    (
    \bm{X}^{(t)},
    t,
    \varnothing
    )
    \right).
\label{eq:computeeps}
\end{equation}
Each reverse step is
\begin{equation}
    \bm{X}^{(t-1)}
    =
    \frac{1}{\sqrt{\alpha_t}}
    \left(
    \bm{X}^{(t)}
    -
    \frac{1-\alpha_t}
    {\sqrt{1-\bar{\alpha}_t}}
    \widehat{\varepsilon}_{\theta}
    (
    \bm{X}^{(t)},
    t,
    y_{\mathrm{tar}}
    )
    \right)
    +
    \sigma_t\bm{z}_t,
    \qquad
    \bm{z}_t\sim\mathcal{N}(0,I),
\label{eq:ddpm_reverse}
\end{equation}
where $\sigma_t^2
    =
    \frac{
    1-\bar{\alpha}_{t-1}
    }{
    1-\bar{\alpha}_t
    }
    (1-\alpha_t)$, and $\bm{z}_1=\bm{0}$ at the final reverse step.

For each generated trajectory, we sample
$\tau^{\mathrm{ctx}}\in D_{\mathrm{traj}}$
and use its first $H_{\mathrm{ctx}}<H$ designs as the context
\begin{equation}
    \bm{X}_{\mathrm{ctx}}
    =
    [
    \bm{x}^{\tau^{\mathrm{ctx}}}_1,
    \ldots,
    \bm{x}^{\tau^{\mathrm{ctx}}}_{H_{\mathrm{ctx}}}
    ].
\end{equation}
We keep this prefix fixed by overwriting the corresponding positions after each reverse step:
\begin{equation}
    \bm{X}^{(t-1)}_{1:H_{\mathrm{ctx}}}
    \leftarrow
    \bm{X}_{\mathrm{ctx}}.
\label{eq:tauupdating}
\end{equation}
This anchors each generated trajectory to an observed prefix while allowing its suffix to move toward the target score.

The generated suffixes form a candidate pool $\mathcal{P}$. To improve robustness, instead of directly returning the top-$Q$ proxy-scored designs, we propose Cluster-then-Select, which retains the top $M_{\mathrm{pool}}$ valid candidates under $\mu_{\phi}$, where $M_{\mathrm{pool}}\geq Q$, partitions them into $Q$ clusters, and selects the design with the highest proxy score from each cluster $\mathcal{B}_q$:
\begin{equation}
    \bm{x}^{\mathrm{out}}_q
    =
    \operatorname*{arg\,max}_{\bm{x}\in\mathcal{B}_q}
    \mu_{\phi}(\bm{x}),
    \qquad
    q=1,\ldots,Q.
\end{equation}
Algorithm~\ref{algo:sampling} summarizes the complete candidate-generation stage, where the resulting candidate set combines high proxy scores with coverage across distinct generated regions.

\begin{algorithm}[ht]
\caption{UGTL: Candidate Generation}
\label{algo:sampling}
\textbf{Input}: Trained diffusion model $\varepsilon_{\theta}$, trajectory dataset $D_{\mathrm{traj}}$, proxy $\mu_{\phi}$, context length $H_{\mathrm{ctx}}$, candidate budget $Q$, pre-screening size $M_{\mathrm{pool}}$, number $N_{\mathrm{gen}}$ of generated trajectories, terminal-score multiplier $\lambda_{\mathrm{tar}}$, guidance scale $s_{\mathrm{cfg}}$, diffusion steps $T_{\mathrm{diff}}$\\
\textbf{Output}: Candidate set $\{\bm{x}^{\mathrm{out}}_1,\ldots,\bm{x}^{\mathrm{out}}_Q\}$
\begin{algorithmic}[1]
\STATE Initialize the candidate pool $\mathcal{P}\leftarrow\emptyset$;
\FOR{$u=1,\ldots,N_{\mathrm{gen}}$}
    \STATE Sample $\bm{X}^{(T_{\mathrm{diff}})}\sim\mathcal{N}(0,I)$ and $\tau^{\mathrm{ctx}}\sim D_{\mathrm{traj}}$;
    \STATE Set $\bm{X}^{(T_{\mathrm{diff}})}_{1:H_{\mathrm{ctx}}}\leftarrow[\bm{x}^{\tau^{\mathrm{ctx}}}_1,\ldots,\bm{x}^{\tau^{\mathrm{ctx}}}_{H_{\mathrm{ctx}}}]$;
    \FOR{$t=T_{\mathrm{diff}},\ldots,1$}
        \STATE Compute the guided noise using Eq.~(\refeq{eq:computeeps});
        \STATE Compute $\bm{X}^{(t-1)}$ using Eq.~(\refeq{eq:ddpm_reverse});
        \STATE Apply context conditioning using Eq.~(\refeq{eq:tauupdating})
    \ENDFOR
    \STATE $\mathcal{P} \leftarrow \mathcal{P} \cup \bm{X}^{(0)}_{H_{\mathrm{ctx+1}}:H}$ 
\ENDFOR
\STATE Retain the top $M_{\mathrm{pool}}$ valid designs in $\mathcal{P}$ according to $\mu_{\phi}$;
\STATE Partition them into $Q$ clusters $\{\mathcal{B}_1,\ldots,\mathcal{B}_Q\}$ using $k$-means;
\FOR{$q=1,\ldots,Q$}
    \STATE $\bm{x}^{\mathrm{out}}_q\leftarrow\operatorname*{arg\,max}_{\bm{x}\in\mathcal{B}_q}\mu_{\phi}(\bm{x})$
\ENDFOR
\RETURN $\{\bm{x}^{\mathrm{out}}_1,\ldots,\bm{x}^{\mathrm{out}}_Q\}$
\end{algorithmic}
\end{algorithm}

\paragraph{Comparison with GTG~\cite{yun2024guided}}
UGTL adapts GTG's conditional trajectory diffusion for trajectory modeling. However, GTG constructs trajectories by sampling from score-feasible nearest neighbors, which improves locality but does not distinguish these neighbors according to their alignment with an optimization direction. UGTL instead starts from our proposed algorithm-dependent learnability and uses uncertainty-aware proxy-gradient guidance to construct optimizer-like improvement trajectories (yielding more consistent trajectories, as shown in Figure~\ref{fig:branin_experiment}), followed by Cluster-then-Select for candidate selection.
Thus, compared with GTG, UGTL retains the trajectory-modeling pipeline while introducing different procedures for trajectory construction and candidate selection. 

\section{Experiments}
\label{sec:experiments}

In this section, we evaluate the performance of our proposed method UGTL empirically on popular benchmarks, Design-Bench~\citep{design-bench} and BBOB~\cite{bbob-functions}, for offline data-driven optimization. To assess UGTL systematically, this section organizes the evaluation into three complementary levels, moving from end-to-end effectiveness to trajectory-level behavior and then to component contributions and robustness. First, at the end-to-end level, we examine whether UGTL is competitive with existing offline optimization methods in terms of final candidate quality. Second, at the trajectory level, we assess whether the proposed trajectory constructor produces locally coherent, improvement-oriented paths and whether these properties are preserved in model-generated trajectories. Finally, at the component level, we test whether the UGTL constructor remains beneficial across different downstream models and training procedures, isolate the additional contribution of the diversity-aware Cluster-then-Select strategy, and assess sensitivity to the main hyperparameters.

\subsection{Experimental Settings}
\label{subsec:experimental-settings}
\paragraph{Benchmark and tasks}
We benchmark UGTL on five Design-Bench tasks~\cite{design-bench}, following recent offline data-driven optimization studies~\cite{ltr,yun2024guided}. The continuous tasks include: 1) Ant Morphology~\cite{gym}, which optimizes a 60-dimensional morphology for locomotion; 2) D'Kitty Morphology~\cite{robel}, which optimizes a 56-dimensional robot morphology; and 3) Superconductor~\cite{superconductor}, which searches an 86-dimensional material representation to maximize critical temperature. The discrete tasks are TF-Bind-8 and TF-Bind-10~\cite{tfbind}, which optimize DNA sequences of length 8 and 10 for transcription-factor binding affinity. 
For each task, we directly use the offline datasets provided by Design-Bench, which contain 10{,}004, 10{,}004, 17{,}014, 32{,}898, and 50{,}000 designs for Ant, D'Kitty, Superconductor, TF-Bind-8, and TF-Bind-10, respectively. 

\paragraph{Controlled BBOB case studies}
Detailed trajectory analysis requires evaluating every intermediate design. Since the evaluations for Design-Bench tasks are quite expensive, Design-Bench mainly measures final candidate quality. We introduce two cheap analytic BBOB functions~\cite{bbob-functions} as controlled case studies to examine the trajectories produced by different methods.
 Specifically, we use Rastrigin, a multimodal function, and Rosenbrock, a narrow-valley function, with dimensions $d\in\{5,10,15,20\}$. 
 For each function and dimension, we sample 50{,}000 points uniformly from the native domain ($[-5.12,5.12]^d$ for Rastrigin and $[-5,5]^d$ for Rosenbrock), transform the minimization objective into a score in $[0,1]$ using min--max scaling, and remove the top $40\%$ by score to form an offline dataset of 30{,}000 points.

\paragraph{Compared methods}
We compare UGTL with five categories of methods, including 24 baselines. The first class consists of baseline optimization methods that maximize a surrogate model, including BO-$q$EI~\cite{bo-book,bo-tutorial}, CMA-ES~\cite{cma-es}, REINFORCE~\cite{reinforce}, and Gradient Ascent with its mean- and minimum-ensemble variants. The second category comprises inverse or conditional generative methods, including CbAS~\cite{cbas}, MINs~\cite{mins}, and DDOM~\cite{ddom}. The third category is called regularized surrogate methods, including COMs~\cite{coms}, RoMA~\cite{roma}, IOM~\cite{iom}, BDI~\cite{bdi}, ICT~\cite{ict}, Tri-Mentoring~\cite{tri-mentoring}, FGM~\cite{fgm}, LTR~\cite{ltr}, GABO~\cite{gabo}, and DynAMO-Adam~\cite{dynamo}. We further compare trajectory-informed methods, MATCH-OPT~\cite{hoang2024learning} and ROOT~\cite{dao2025root}, and generative trajectory methods, BONET~\cite{mashkaria2023generative}, GTG~\cite{yun2024guided}, and PGS~\cite{chemingui2024offline}. Except for ROOT, LTR, GABO, and DynAMO-Adam, the baseline results are taken from~\cite{ltr} under the same Design-Bench protocol. We run ROOT, LTR, GABO, and the Adam variant of DynAMO using their official implementations.

\paragraph{Implementation details and hyperparameters}
For trajectory construction, we follow the hyperparameters in~\cite{yun2024guided} by setting the trajectory horizon to $H=64$, the neighborhood size to $K=20$, the initial percentile to $p_{\mathrm{init}}=20\%$, and the number $N_{\mathrm{traj}}$ of trajectories to 4{,}000 for continuous tasks and 1{,}000 for discrete tasks. The score tolerance $\xi=0.01$ on D'Kitty and $0.05$ on the other tasks. Each surrogate is an independently initialized two-hidden-layer MLP of width 1{,}024 with LeakyReLU activations trained for 5{,}000 Adam steps with learning rate $10^{-3}$ and batch size 128. We set the ensemble size $M_{\mathrm{ens}}=10$, retention ratio $\rho=0.5$, and uncertainty temperature $c_{\mathrm{temp}}=0.5$.

For trajectory modeling, the conditional diffusion model is a temporal U-Net~\citep{pmlr-v162-janner22a} with channel multipliers $(1,4,8)$, base width 128 on continuous tasks and 32 on discrete tasks, $T_{\mathrm{diff}}=200$ steps, and a cosine noise schedule. 
The diffusion model is trained by Adam~\cite{adam} using a batch size of 128 and EMA decay of 0.995. Models for continuous tasks are trained for 50{,}000 steps at a learning rate of $10^{-4}$, while those for discrete tasks are trained for 20{,}000 steps at $2\times10^{-4}$. The condition-drop probability $p_{\mathrm{drop}}=0.25$. We report the results of the final checkpoint. 

For candidate generation, the classifier-free guidance scale, context length, number of generated trajectories, and pre-screening pool size are $s_{\mathrm{cfg}}=1.5$, $H_{\mathrm{ctx}}=32$, $N_{\mathrm{gen}}=1000$, and $M_{\mathrm{pool}}=2048$, respectively. $\lambda_{\mathrm{tar}}$ is set to $1.5$ on TF-Bind-8 and $1.3$ otherwise. Cluster-then-Select applies $k$-means in the normalized diffusion representation, using $k$-means++ initialization~\citep{arthur2007kmeans} with 10 independent starts. 

Additionally, in the controlled BBOB comparison, we set $H=64$, $K=50$, $N_{\mathrm{traj}}=1000$, $p_{\mathrm{init}}=20\%$, $\xi=0.05$, and $N_{\mathrm{gen}}=1000$. All experiments are implemented in PyTorch 1.13.1.

\paragraph{Evaluation and metrics}
Each method returns $Q=128$ candidates. Following the Design-Bench protocol~\cite{design-bench,ltr}, we evaluate the candidates using the ground-truth oracle and report the normalized $100$th-percentile score, i.e., the score of the best returned design. A raw score $y$ is normalized as $(y-y_{\min}^{\mathrm{all}})/(y_{\max}^{\mathrm{all}}-y_{\min}^{\mathrm{all}})$, where $y_{\min}^{\mathrm{all}}$ and $y_{\max}^{\mathrm{all}}$ are the extrema of the full, unobserved benchmark dataset. We report the mean $\pm$ standard deviation over eight independent runs. 

\subsection{Main Results on Design-Bench}
\label{subsec:designbench}

Table~\ref{tab:design-bench-max-score} reports the $100$th-percentile normalized scores. Among the 25 methods, UGTL achieves the best mean rank of $3.1$, followed by LTR~\citep{ltr} with $4.8$ and BDI~\citep{bdi} with $5.9$. UGTL obtains the highest mean score on Superconductor and TF-Bind-10 and improves over the best observed offline score, $\mathcal{D}(\mathrm{best})$, on all five tasks. Among generative trajectory methods, UGTL achieves the highest mean score on every task compared with BONET~\citep{mashkaria2023generative}, GTG~\citep{yun2024guided}, and PGS~\citep{chemingui2024offline}. These results demonstrate the overall effectiveness of UGTL on Design-Bench and also motivate the trajectory-level analyses below.

\begin{table}[t!]
\centering
\caption{$100$th-percentile normalized score among $Q=128$ candidates on five Design-Bench tasks (mean $\pm$ standard deviation over eight runs). {\color{rank1blue}\textbf{Blue}} and {\color{rank2purple}\textbf{Violet}} denote the best and runner-up reported means. $\mathcal{D}(\mathrm{best})$ is the best score in the offline dataset. Mean rank is computed across all 25 methods with average ranks for ties.}
\renewcommand{\arraystretch}{1.1}
\setlength{\tabcolsep}{4pt}
\resizebox{\linewidth}{!}{
\begin{tabular}{cl|ccccc|c}
\toprule
\multicolumn{1}{c}{Category} & \multicolumn{1}{l|}{Method} & Ant & D'Kitty & Superconductor & TF-Bind-8 & TF-Bind-10 & Mean Rank \\
\midrule
\ & \multicolumn{1}{l|}{$\mathcal{D}(\text{best})$} & 0.565 & 0.884 & 0.400 & 0.439 & 0.467 & / \\
\midrule
\multirow{6}{*}{\makecell[c]{Proxy\\Optimization}}
 & BO-$q$EI
 & $0.812 \pm 0.000$
 & $0.896 \pm 0.000$
 & $0.382 \pm 0.013$
 & $0.802 \pm 0.081$
 & $0.628 \pm 0.036$
 & 19.4/25 \\

 & CMA-ES
 & $\mathbf{\color{rank1blue} 1.712 \pm 0.754}$
 & $0.725 \pm 0.002$
 & $0.463 \pm 0.042$
 & $0.944 \pm 0.017$
 & $0.641 \pm 0.036$
 & 12.2/25 \\

 & REINFORCE
 & $0.248 \pm 0.039$
 & $0.541 \pm 0.196$
 & $0.478 \pm 0.017$
 & $0.935 \pm 0.049$
 & $\mathbf{\color{rank2purple} 0.673 \pm 0.074}$
 & 15.0/25 \\

 & Grad.\ Ascent
 & $0.273 \pm 0.023$
 & $0.853 \pm 0.018$
 & $0.510 \pm 0.028$
 & $0.969 \pm 0.021$
 & $0.646 \pm 0.037$
 & 12.0/25 \\

 & Grad.\ Ascent Mean
 & $0.306 \pm 0.053$
 & $0.875 \pm 0.024$
 & $0.508 \pm 0.019$
 & $\mathbf{\color{rank1blue} 0.985 \pm 0.008}$
 & $0.633 \pm 0.030$
 & 11.8/25 \\

 & Grad.\ Ascent Min
 & $0.282 \pm 0.033$
 & $0.884 \pm 0.018$
 & $\mathbf{\color{rank2purple} 0.514 \pm 0.020}$
 & $\mathbf{\color{rank2purple} 0.979 \pm 0.014}$
 & $0.632 \pm 0.027$
 & 11.6/25 \\
\midrule
\multirow{3}{*}{\makecell[c]{Inverse/Conditional\\Generative Modeling}}
 & CbAS
 & $0.846 \pm 0.032$
 & $0.896 \pm 0.009$
 & $0.421 \pm 0.049$
 & $0.921 \pm 0.046$
 & $0.630 \pm 0.039$
 & 16.7/25 \\

 & MINs
 & $0.906 \pm 0.024$
 & $0.939 \pm 0.007$
 & $0.464 \pm 0.023$
 & $0.910 \pm 0.051$
 & $0.633 \pm 0.034$
 & 14.0/25 \\

 & DDOM
 & $0.908 \pm 0.024$
 & $0.930 \pm 0.005$
 & $0.452 \pm 0.028$
 & $0.913 \pm 0.047$
 & $0.616 \pm 0.018$
 & 15.8/25 \\
\midrule
\multirow{10}{*}{\makecell[c]{Regularized\\Surrogate Modeling}}
 & COMs
 & $0.916 \pm 0.026$
 & $0.949 \pm 0.016$
 & $0.460 \pm 0.040$
 & $0.953 \pm 0.038$
 & $0.644 \pm 0.052$
 & 10.3/25 \\

 & RoMA
 & $0.430 \pm 0.048$
 & $0.767 \pm 0.031$
 & $0.494 \pm 0.025$
 & $0.665 \pm 0.000$
 & $0.553 \pm 0.000$
 & 20.1/25 \\

 & IOM
 & $0.889 \pm 0.034$
 & $0.928 \pm 0.008$
 & $0.491 \pm 0.034$
 & $0.925 \pm 0.054$
 & $0.628 \pm 0.036$
 & 14.1/25 \\

 & BDI
 & $\mathbf{\color{rank2purple} 0.963 \pm 0.000}$
 & $0.941 \pm 0.000$
 & $0.508 \pm 0.013$
 & $0.973 \pm 0.000$
 & $0.658 \pm 0.000$
 & 5.9/25 \\

 & ICT
 & $0.915 \pm 0.024$
 & $0.947 \pm 0.009$
 & $0.494 \pm 0.026$
 & $0.897 \pm 0.050$
 & $0.659 \pm 0.024$
 & 10.0/25 \\

 & Tri-Mentoring
 & $0.891 \pm 0.011$
 & $0.947 \pm 0.005$
 & $0.503 \pm 0.013$
 & $0.956 \pm 0.000$
 & $0.662 \pm 0.012$
 & 8.0/25 \\

 & FGM
 & $0.923 \pm 0.023$
 & $0.944 \pm 0.014$
 & $0.481 \pm 0.024$
 & $0.811 \pm 0.079$
 & $0.611 \pm 0.008$
 & 14.2/25 \\

 & LTR
 & $0.920 \pm 0.026$
 & $\mathbf{\color{rank2purple} 0.958 \pm 0.011}$
 & $0.509 \pm 0.027$
 & $0.977 \pm 0.008$
 & $0.655 \pm 0.013$
 & $\mathbf{\color{rank2purple} 4.8/25}$ \\

 & GABO
 & $0.038 \pm 0.012$
 & $0.719 \pm 0.001$
 & $0.374 \pm 0.020$
 & $0.926 \pm 0.038$
 & $0.619 \pm 0.043$
 & 21.1/25 \\

 & DynAMO-Adam
 & $0.113 \pm 0.085$
 & $0.789 \pm 0.059$
 & $0.413 \pm 0.106$
 & $0.719 \pm 0.142$
 & $0.556 \pm 0.090$
 & 23.0/25 \\
\midrule
\multirow{2}{*}{\makecell[c]{Trajectory-Informed\\Optimization}}
 & MATCH-OPT
 & $0.933 \pm 0.016$
 & $0.952 \pm 0.008$
 & $0.504 \pm 0.021$
 & $0.824 \pm 0.067$
 & $0.655 \pm 0.050$
 & 8.7/25 \\

 & ROOT
 & $0.955 \pm 0.017$
 & $\mathbf{\color{rank1blue} 0.971 \pm 0.005}$
 & $0.451 \pm 0.032$
 & $0.977 \pm 0.015$
 & $0.653 \pm 0.030$
 & 7.1/25 \\
\midrule
\multirow{4}{*}{\makecell[c]{Generative\\Trajectory Learning}}
 & BONET
 & $0.921 \pm 0.031$
 & $0.949 \pm 0.016$
 & $0.390 \pm 0.022$
 & $0.798 \pm 0.123$
 & $0.575 \pm 0.039$
 & 16.5/25 \\

 & GTG
 & $0.855 \pm 0.044$
 & $0.942 \pm 0.017$
 & $0.480 \pm 0.055$
 & $0.910 \pm 0.040$
 & $0.619 \pm 0.029$
 & 15.0/25 \\

 & PGS
 & $0.715 \pm 0.046$
 & $0.954 \pm 0.022$
 & $0.444 \pm 0.020$
 & $0.889 \pm 0.061$
 & $0.634 \pm 0.040$
 & 14.6/25 \\

 & \textbf{UGTL}
 & $0.958 \pm 0.012$
 & $0.955 \pm 0.013$
 & $\mathbf{\color{rank1blue} 0.535 \pm 0.030}$
 & $0.956 \pm 0.013$
 & $\mathbf{\color{rank1blue} 0.674 \pm 0.044}$
 & $\mathbf{\color{rank1blue} 3.1/25}$ \\
\bottomrule
\end{tabular}
}
\label{tab:design-bench-max-score}
\end{table}

\subsection{Trajectory-quality Analysis}
\label{subsec:traj-synthesis-analysis}

The comparison in Section~\ref{subsec:designbench} measures final candidate quality but does not show how complete trajectories evolve. We therefore complement it with trajectory-level diagnostics and the controlled BBOB case studies described in Section~\ref{subsec:experimental-settings}. The analytically evaluable objectives of BBOB allow us to evaluate every point in both constructed and generated trajectories while keeping the downstream diffusion model and sampler fixed.

\subsubsection{Trajectory-quality Diagnostics}
In this subsection, we compare the trajectory quality of BONET~\citep{mashkaria2023generative}, PGS~\citep{chemingui2024offline}, GTG~\citep{yun2024guided}, and UGTL. On Design-Bench, we evaluate the constructed trajectories; on BBOB, we evaluate both constructed and model-generated trajectories. For the generated BBOB trajectories, all methods use the same conditional diffusion model, training schedule, and full-trajectory sampler.
We evaluate the final sampled trajectories.
For a trajectory $\tau$, we measure cumulative regret and smoothness as
\begin{equation}
    R(\tau)=\sum_{k=1}^{H}(1-s_k),
    \qquad
    S(\tau)=\frac{1}{H-2}\sum_{k=1}^{H-2}
    \|\bm{x}_{k+2}-2\bm{x}_{k+1}+\bm{x}_k\|^2,
\end{equation}
where $s_k$ is the normalized objective value. The regret is small when a trajectory remains in high-value regions. Since $\bm{x}_{k+2}-2\bm{x}_{k+1}+\bm{x}_k=(\bm{x}_{k+2}-\bm{x}_{k+1})-(\bm{x}_{k+1}-\bm{x}_k)$ is the difference between two consecutive displacement vectors, a smaller smoothness value indicates less variation between successive moves and hence a smoother trajectory. Before computing smoothness, we normalize the design coordinates using min--max scaling. 
We report Design-Bench smoothness only on the three continuous tasks because distances between discrete latent representations are not directly comparable.
Each diagnostic is computed over eight independent runs.

Figure~\ref{fig:trajectory-quality-summary} summarizes the trajectory diagnostics. UGTL ranks first on Design-Bench regret ($1.2/4$) and smoothness ($1.3/4$). On BBOB, it ranks first on constructed-trajectory regret and on both regret and smoothness for generated trajectories, while tying GTG on constructed-trajectory smoothness. These results show that UGTL produces trajectories that remain in high-value regions and change more smoothly between successive moves.

\begin{figure}[t!]
    \centering
    \includegraphics[width=0.98\linewidth]{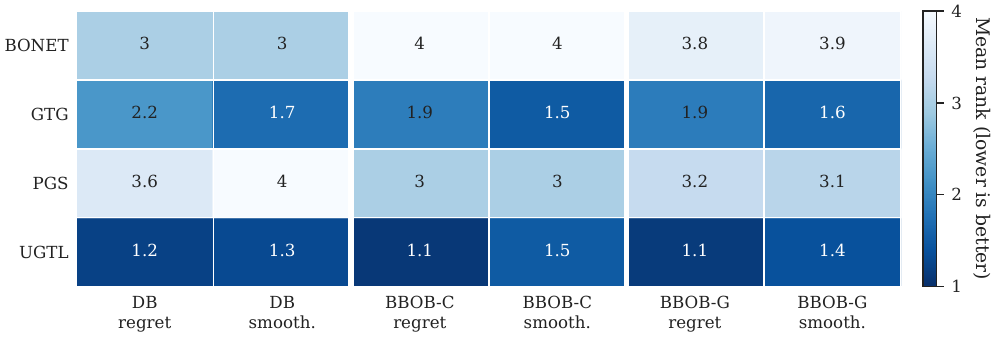}
    \caption{Mean-rank summary of trajectory quality (lower is better). DB reports constructed-trajectory regret over five Design-Bench tasks and smoothness over its three continuous tasks. BBOB reports regret/smoothness for constructed (C) and generated (G) trajectories over eight function--dimension pairs. Each cell is the task-averaged rank of one method.}
    \label{fig:trajectory-quality-summary}
\end{figure}

\subsubsection{Controlled BBOB Comparison}
\label{subsec:bbob}

Figure~\ref{fig:bbob-performance} reports the controlled BBOB results, i.e., the 100th-percentile normalized scores achieved by different trajectory construction methods (using the same downstream diffusion model and sampler) in BBOB tasks. UGTL obtains the highest mean on all four Rastrigin tasks and three of the four Rosenbrock tasks. It achieves the best mean rank of $1.1/4$, compared with $2.3/4$ for GTG, $3.0/4$ for PGS, and $3.6/4$ for BONET. Since the downstream diffusion model and sampler are shared, the improvement directly demonstrates the effect of the trajectory constructor in this controlled setting.

\begin{figure}[H]
    \centering
    \includegraphics[width=0.98\linewidth]{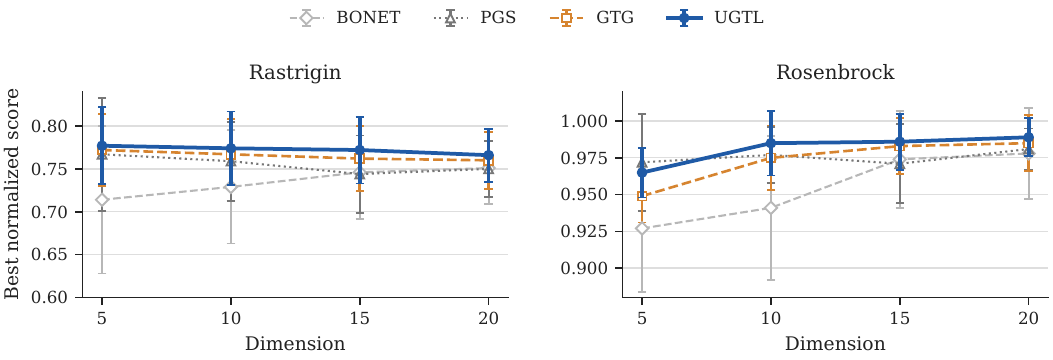}
    \caption{Controlled BBOB performance as the dimension increases (mean $\pm$ standard deviation over eight runs). Each result is the final 100th percentile normalized score achieved by different trajectory construction methods, with standard deviation across different runs. All methods use the same downstream diffusion model and sampler.}
    \label{fig:bbob-performance}
\end{figure}

\subsection{Ablation Studies}
\label{subsec:ablation}

We first examine whether the UGTL constructor can transfer to other trajectory models and whether Cluster-then-Select improves candidate generation. Then, we study sensitivity to the main hyperparameters of trajectory construction and candidate generation.

\subsubsection{Trajectory Construction and Candidate Selection}
\label{subsubsec:ablation-traj}
\label{subsubsec:ablation-cluster}

\paragraph{Trajectory-constructor replacement}
We replace the original constructor in BONET's Transformer pipeline, PGS's offline-RL pipeline, and GTG's diffusion pipeline with the UGTL constructor, while leaving their downstream models and training procedures unchanged. Figure~\ref{fig:module-ablation-summary}(a) reports the change from each original pipeline. The replacement improves 13 of the 15 architecture--task pairs and increases the five-task average for all three architectures, with the largest improvement in the offline-RL pipeline. This transfer shows that the constructed trajectories are useful beyond UGTL's diffusion model.

\paragraph{Candidate selection}
Figure~\ref{fig:module-ablation-summary}(b) compares Cluster-then-Select with directly returning the top $Q$ candidates ranked by the proxy. Cluster-then-Select improves the mean score on all five tasks, showing the benefit of distributing the evaluation budget across different high-proxy regions.

\begin{figure}[h!]
    \centering
    \includegraphics[width=0.98\linewidth]{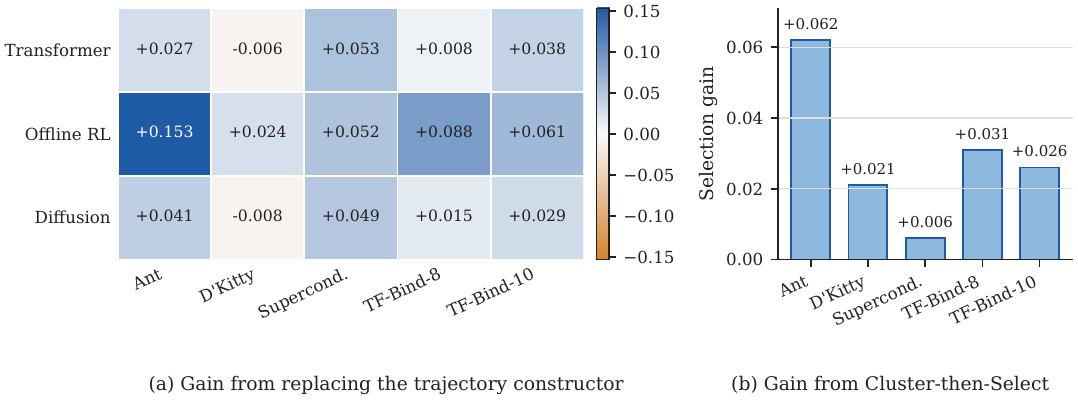}
    \caption{Module ablations on Design-Bench. (a) Change in normalized score after replacing the native constructor of each downstream architecture with the UGTL constructor. (b) Change from applying Cluster-then-Select to the diffusion pipeline. Positive values indicate improvement.}
    \label{fig:module-ablation-summary}
\end{figure}

\subsubsection{Hyperparameter Sensitivity}
\label{subsubsec:ablation-hyperparam}

To better understand the sensitivity of UGTL, we vary one hyperparameter at a time while fixing the others at their main values. The sweeps use $N_{\mathrm{traj}}\in\{1000,2000,4000\}$, $H\in\{32,64,128\}$, $K\in\{10,20,50\}$, $\xi\in\{0.01,0.05,0.10\}$, $p_{\mathrm{init}}\in\{10\%,20\%,30\%\}$, and $\lambda_{\mathrm{tar}}\in\{1.0,1.3,1.5\}$. The three $M_{\mathrm{pool}}$ settings shown as 1k, 2k, and 4k are 1024, 2048 and 4096, respectively. Figures~\ref{fig:sensitivity-nraw}--\ref{fig:sensitivity-target-multiplier} report the change in normalized score from each task's main setting. Black rings indicate the settings used in the main experiments, while horizontal error bars show $\pm$ one standard deviation over eight independent runs.

\begin{figure}[h!]
    \centering
    \includegraphics[width=0.88\linewidth]{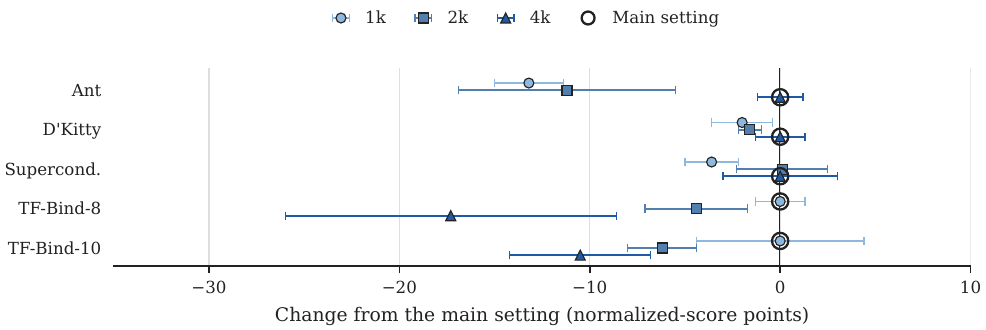}
    \caption{Sensitivity to the number $N_{\mathrm{traj}}$ of constructed trajectories. Points show the change from each task's main setting; black rings identify the main settings. Higher is better.}
    \label{fig:sensitivity-nraw}
\end{figure}

\begin{figure}[h!]
    \centering
    \includegraphics[width=0.88\linewidth]{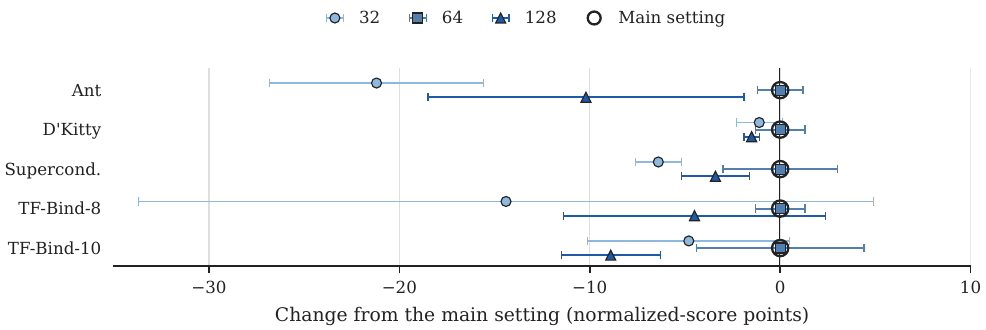}
    \caption{Sensitivity to the trajectory horizon $H$. Points show the change from each task's main setting; black rings identify the main setting. Higher is better.}
    \label{fig:sensitivity-horizon}
\end{figure}

\begin{figure}[h!]
    \centering
    \includegraphics[width=0.88\linewidth]{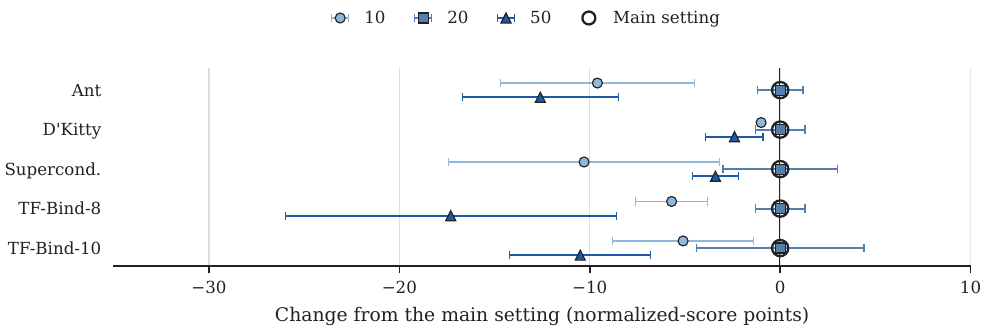}
    \caption{Sensitivity to the neighborhood size $K$. Points show the change from each task's main setting; black rings identify the main setting. Higher is better.}
    \label{fig:sensitivity-neighbors}
\end{figure}

\begin{figure}[t!]
    \centering
    \includegraphics[width=0.88\linewidth]{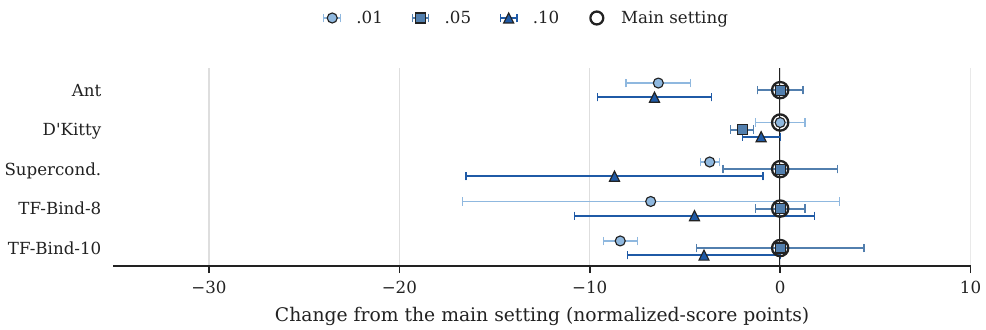}
    \caption{Sensitivity to the score-relaxation tolerance $\xi$. Points show the change from each task's main setting; black rings identify the main settings. Higher is better.}
    \label{fig:sensitivity-tolerance}
\end{figure}

\begin{figure}[h!]
    \centering
    \includegraphics[width=0.88\linewidth]{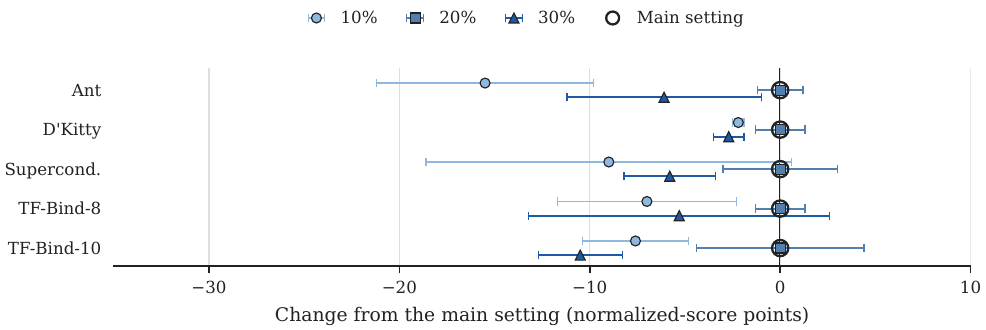}
    \caption{Sensitivity to the initial percentile $p_{\mathrm{init}}$. Points show the change from each task's main setting; black rings identify the main setting. Higher is better.}
    \label{fig:sensitivity-initial-percentile}
\end{figure}

\begin{figure}[h!]
    \centering
    \includegraphics[width=0.88\linewidth]{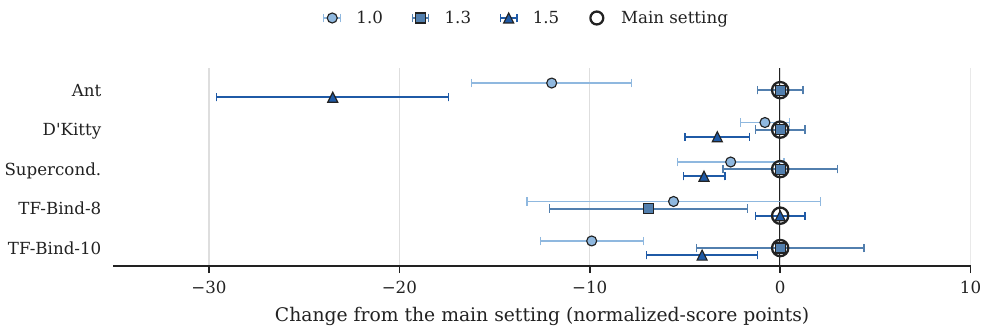}
    \caption{Sensitivity to the target multiplier $\lambda_{\mathrm{tar}}$. Points show the change from each task's main setting; black rings identify the main settings. Higher is better.}
    \label{fig:sensitivity-target-multiplier}
\end{figure}

\begin{figure}[h!]
    \centering
    \includegraphics[width=0.88\linewidth]{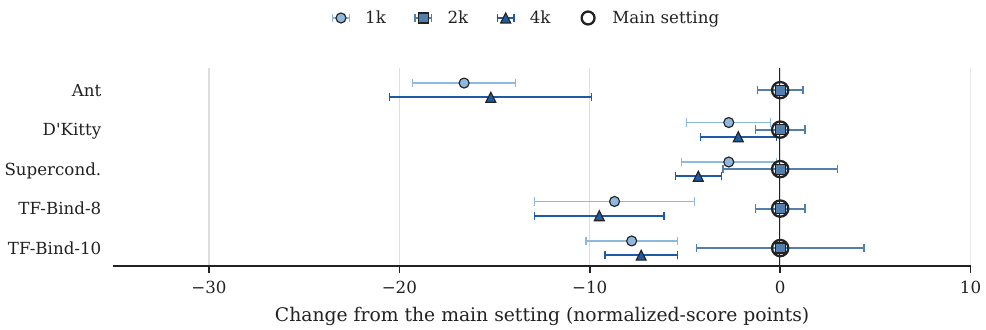}
    \caption{Sensitivity to the pre-screening pool size $M_{\mathrm{pool}}$. Points show the change from each task's main setting; black rings identify the main setting. Higher is better.}
    \label{fig:sensitivity-pool-size}
\end{figure}

We can observe from Figures~\ref{fig:sensitivity-nraw}--\ref{fig:sensitivity-target-multiplier} consistent patterns across tasks. The common settings $H=64$, $K=20$, $p_{\mathrm{init}}=20\%$, and $M_{\mathrm{pool}}=2048$ obtain the best mean on all five tasks in their respective studies, showing that these hyperparameters can be shared across tasks. The remaining parameters also follow simple choices: for trajectory construction, the three continuous tasks use $N_{\mathrm{traj}}=4{,}000$ and the two discrete tasks use $N_{\mathrm{traj}}=1{,}000$, while $\xi=0.05$ and $\lambda_{\mathrm{tar}}=1.3$ work well on four of the five tasks. Overall, UGTL admits a unified configuration for most hyperparameters, with only limited adjustment for the remaining ones, suggesting that extensive per-task tuning is unnecessary.

\section{Conclusion}\label{sec:conclusion}

This paper revisits learnability in offline data-driven optimization from an algorithm-dependent perspective. PAC or PMAC learnability can leave the optimal region unreliable even when average prediction under the sampling distribution is accurate, and thus cannot guarantee good approximations for offline optimization~\cite{balkanski2017sample,balkanski2022limitations,balkanski2017minimizing}. Algorithm-dependent learnability instead localizes accuracy to the information queried along an optimizer's trajectory. Its value-query form yields good approximation guarantees for greedy and local search algorithms on representative submodular optimization problems, while its first-order form yields a good guarantee for projected gradient descent on convex minimization.

Motivated by algorithm-dependent learnability, we develop a trajectory-learning framework (which consists of trajectory construction, trajectory modeling, and candidate generation) and analyze existing trajectory-based methods in a unified way. Using this framework, we further propose UGTL, an uncertainty-aware gradient-guided instantiation with conditional diffusion and diversity-aware selection. To better reflect the behavior of optimizers, the trajectories are constructed with the guidance of the gradient and prediction uncertainty of an ensemble of surrogate models. UGTL achieves the best mean rank, $3.1/25$, among 25 methods on five Design-Bench tasks. Under matched downstream models and samplers, controlled BBOB experiments and cross-architecture replacements show that the constructor substantially shapes the improvement behavior learned by the downstream model; diversity-aware selection further raises the reported mean on all five tasks. Extending the principle of algorithm-dependent learnability to more complex offline scenarios (e.g., multi-objective~\cite{xue2024offline} and universal~\cite{tan2025towards} offline optimization) will be interesting future work.

\section{Acknowledgements}

We want to thank Dr. Shen-Huan Lyu from Hohai University for his helpful discussion and proofreading the paper. This work was supported by the National Science and Technology Major Project (2022ZD0116600), the National Science Foundation of China (62276124, 624B2069), the Fundamental Research Funds for the Central Universities (14380020), the Fundamental and Interdisciplinary Disciplines Breakthrough Plan of the Ministry of Education of China (JYB2025XDXM118), and the “111 Center” (B26023). Chao Qian is the corresponding author. 

\bibliographystyle{plain}
\bibliography{Rethink-Learnability-OfflineOpt}

\end{document}